\def\eqref#1{equation~\ref{#1}}
\def\1{\bm{1}}
\def\rvu{{\mathbf{i}}}
\def\rvn{{\mathbf{n}}}
\def\rvu{{\mathbf{u}}}
\def\rvw{{\mathbf{w}}}
\def\rvx{{\mathbf{x}}}
\def\rvy{{\mathbf{y}}}
\def\rvz{{\mathbf{z}}}
\def\rmI{{\mathbf{I}}}
\def\mA{{\bm{A}}}
\def\mB{{\bm{B}}}
\def\mM{{\bm{M}}}
\def\mP{{\bm{P}}}
\DeclareMathAlphabet{\mathsfit}{\encodingdefault}{\sfdefault}{m}{sl}
\SetMathAlphabet{\mathsfit}{bold}{\encodingdefault}{\sfdefault}{bx}{n}
\def\gG{{\mathcal{G}}}
\def\gL{{\mathcal{L}}}
\def\gN{{\mathcal{N}}}
\def\gX{{\mathcal{X}}}
\def\gY{{\mathcal{Y}}}
\def\sR{{\mathbb{R}}}
\newcommand{\E}{\mathbb{E}}
\newcommand{\Cov}{\mathrm{Cov}}
\DeclareMathOperator{\Tr}{Tr}
\newcommand{\DiT}{DiT\xspace}
\newcommand{\DDT}{$\text{DiT}^{\text{DH}}$\xspace} 
\newcommand{\DDTXL}{$\text{DiT}^{\text{DH}}$-XL\xspace}
\newcommand{\DDTS}{$\text{DiT}^{\text{DH}}$-S\xspace} 
\newcommand{\DDTs}{\DDTs\xspace}
\newcommand{\LGT}{DiT\xspace}
\newcommand{\Dinovt}{DINOv2\xspace}
\newcommand{\Siglipt}{SigLIP2\xspace}
\newcommand{\MAE}{MAE\xspace}
\newcommand{\DinoB}{DINOv2-B\xspace}
\newcommand{\DinoS}{DINOv2-S\xspace}
\newcommand{\DinoL}{DINOv2-L\xspace}
\newcommand{\SiglipB}{SigLIP2-B\xspace}
\newcommand{\MAEB}{MAE-B\xspace}
\newcommand{\DXL}{ViT-XL\xspace}
\newcommand{\DL}{ViT-L\xspace}
\newcommand{\DB}{ViT-B\xspace}
\newcommand{\round}[1]{\num[round-mode=places, round-precision=2]{#1}}
\newcommand{\DiTS}{DiT-S\xspace}
\newcommand{\DiTXL}{DiT-XL\xspace}
\newcommand{\SEM}{representation encoders}
\newcommand{\MODEL}{RAE}
\newcommand{\RAE}{\MODEL}
\definecolor{customgreen}{HTML}{E6F8E0}
\newcommand{\GOOD}{{\color{ggreen}\ding{51}}}
\newcommand{\BAD}{{\color{gred}\ding{55}}}
\definecolor{ggreen}{rgb}{0.53, 0.69, 0.43}
\definecolor{gred}{HTML}{F5433D}
\definecolor{scholarblue}{rgb}{0.21,0.49,0.74}
\definecolor{darkblue}{rgb}{0, 0, 0.5}
\definecolor{deemph}{gray}{0.6}
\newcommand{\gc}[1]{\textcolor{deemph}{#1}}
\newcolumntype{x}[1]{>{\centering\arraybackslash}p{#1pt}}
\newcolumntype{y}[1]{>{\raggedright\arraybackslash}p{#1pt}}
\newcolumntype{z}[1]{>{\raggedleft\arraybackslash}p{#1pt}}
\newlength\savewidth
\newcommand{\tablestyle}[2]{\setlength{\tabcolsep}{#1}\renewcommand{\arraystretch}{#2}\centering\scriptsize}
\newtheorem{theorem}{Theorem}
\newcommand{\finding}[2]{
    \begin{tcolorbox}[
        colback=white!90!gray,     
        colframe=teal!60!black,     
        arc=5pt,                    
        boxsep=5pt,                 
        left=10pt,                  
        right=10pt,                 
        top=2pt,                    
        bottom=2pt,                 
        boxrule=0.8pt,              
        drop shadow=gray!50!white,  
        enhanced jigsaw             
    ]
    \vspace{-0.1cm}
        \paragraph{\textbf{\textit{}}} #2
    \end{tcolorbox}
    \vspace{-0.1cm}
}
\newcommand{\fancybreak}[0]{
    \noindent~\hfill\noindent\rule{0.9\linewidth}{0.8pt}~\hfill~
}
\title{Diffusion Transformers \\ with Representation Autoencoders}
\author{Boyang Zheng\quad Nanye Ma\quad Shengbang Tong\quad Saining Xie \\
New York University\\
}
\begin{document}

\maketitle

\begin{abstract}
Latent generative modeling, where a pretrained autoencoder maps pixels into a latent space for the diffusion process, has become the standard strategy for Diffusion Transformers (DiT); however, the autoencoder component has barely evolved. Most DiTs continue to rely on the original VAE encoder, which introduces several limitations: outdated backbones that compromise architectural simplicity, low-dimensional latent spaces that restrict information capacity, and weak representations that result from purely reconstruction-based training and ultimately limit generative quality. In this work, we explore replacing the VAE with pretrained representation encoders (e.g., DINO, SigLIP, MAE) paired with trained decoders, forming what we term Representation Autoencoders (RAEs). These models provide both high-quality reconstructions and semantically rich latent spaces, while allowing for a scalable transformer-based architecture. Since these latent spaces are typically high-dimensional, a key challenge is enabling diffusion transformers to operate effectively within them. We analyze the sources of this difficulty, propose theoretically motivated solutions, and validate them empirically. Our approach achieves faster convergence without auxiliary representation alignment losses. 
Using a DiT variant equipped with a lightweight, wide DDT head, we achieve strong image generation results on ImageNet: 1.51 FID at $256\times256$ (no guidance) and 1.13 at both $256\times256$ and $512\times512$ (with guidance). RAE offers clear advantages and should be the new default for diffusion transformer training. 

\vspace{2mm}
\textbf{Project page:} \href{https://rae-dit.github.io}{\textcolor{scholarblue}{rae-dit.github.io}}
\end{abstract} 
\vspace{-1em}
\begin{figure*}[ht!] 
\centering
	\includegraphics[width=\linewidth]{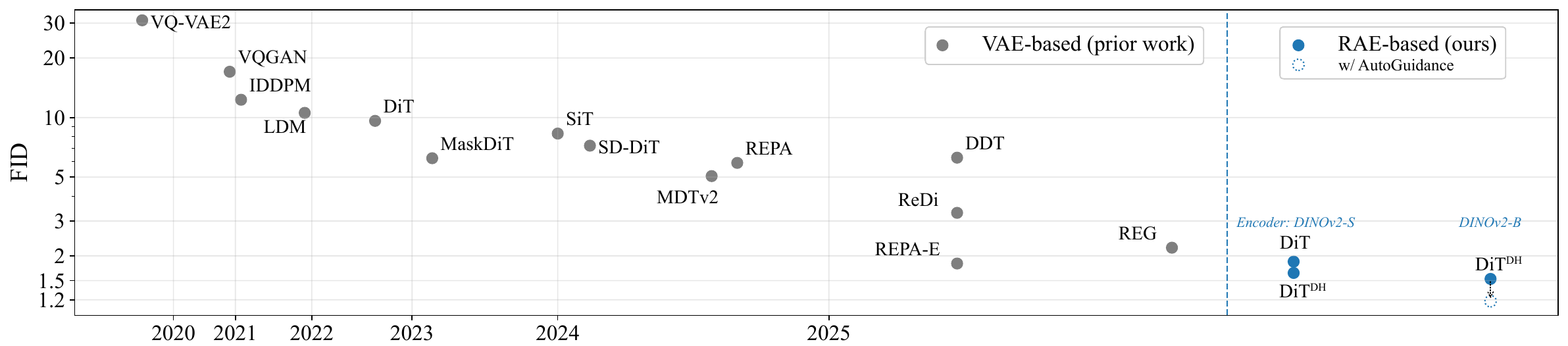}
    \caption{\emph{Representation Autoencoder} (\RAE{}) uses frozen pretrained representations as the encoder with a lightweight decoder to reconstruct input images without compression. RAE enables faster convergence and higher-quality samples in latent diffusion training compared to VAE-based models.}
    \label{fig:teaser}
    \vspace{-1em}
\end{figure*}

\section{Introduction}

The evolution of generative modeling has been driven by a continual redefinition of \emph{where} and \emph{how} models learn to represent data. Early pixel-space models sought to directly capture image statistics, but the emergence of latent diffusion~\citep{vahdat2021score, LDM} reframed generation as a process operating within a learned, compact representation space. By diffusing in this space rather than in raw pixels, models such as Latent Diffusion Models (LDM)~\citep{LDM} and Diffusion Transformers (DiT)~\citep{dit,sit} achieve higher visual fidelity and efficiency, powering the most capable image and video generators of today.

Despite progress in diffusion backbones, the autoencoder defining the latent space remains largely unchanged. The widely used SD-VAE \citep{LDM} still relies on heavy channel-wise compression and a reconstruction-only objective, producing low-capacity latents that capture local appearance but lack global semantic structure crucial for generalization and generative performance of diffusion models~\citep{song2025selective}. In addition, SD-VAE, built on a legacy convolutional design, remains computationally inefficient (see Fig.~\ref{fig:VAE_vs_RAE}). 

Meanwhile, visual representation learning has undergone a rapid transformation. Self-supervised and multimodal encoders such as DINO~\citep{Dinov2}, MAE~\citep{MAE}, JEPA~\citep{assran2023self} and CLIP / SigLIP~\citep{radford2021learning,siglip2} learn semantically structured visual features that generalize across tasks and scales and provide a natural basis for visual understanding. However, latent diffusion remains largely isolated from this progress, continuing to diffuse in reconstruction-trained VAE spaces rather than semantically meaningful representational ones. Recent work attempts to improve latent quality \emph{indirectly} through REPA-style~\citep{repa,lgt,repa-e} alignment with external encoders, but these methods introduce extra training stages, auxiliary losses, and tuning complexity.

This separation stems from long-standing assumptions about the \emph{incompatibility} between semantic and generative objectives. It is widely believed that encoders trained to capture semantics are \emph{unsuited} for faithful reconstruction, since they \emph{``focus on high-level information and can only reconstruct an image with high-level semantic similarities.}''~\citep{titok} In addition, diffusion models are believed to perform poorly in high-dimensional latent spaces~\citep{ImprovDiffus,lgt,SD3,pgv3}, leading practitioners to favor low-dimensional VAE latents over the typically much higher-dimensional representations of semantic encoders. In this work, we show that \emph{both assumptions might be wrong}. We demonstrate that \emph{frozen} representation encoders, even those explicitly optimized for semantics over reconstruction, can be repurposed into powerful autoencoders for generation, yielding reconstructions superior to SD-VAE without architectural complexity or auxiliary losses. Furthermore, we find that diffusion transformer training can be stable and efficient in these higher-dimensional latent spaces. With the right architectural adjustments, higher-dimensional representations are not a liability but an \emph{advantage}, offering richer structure, faster convergence, and better generation quality. Notably, higher-dimensional latents introduce effectively \emph{no extra compute or memory costs} since the token count is fixed (determined by the patch size) and the channels are projected to the DiT hidden dimension in the first layer.

We formalize this insight through \emph{Representation Autoencoders} (RAEs), a new class of autoencoders that replace the VAE with a pretrained representation encoder (e.g., DINO) paired with a trained decoder. RAEs produce latent spaces that are semantically rich, structurally coherent, and \emph{diffusion-friendly}, linking semantic and generative modeling through a shared latent representation.

While feasible in principle, adapting diffusion transformers to these high-dimensional semantic latents requires careful design. Original DiTs, designed for compact SD-VAEs, struggle with the increased dimensionality due to: 
(1) \emph{Transformer design:} DiTs cannot fit even a single image unless their width exceeds the token dimension, implying width must scale with latent dimensionality; (2) \emph{Noise scheduling:} resolution-based schedule shifts~\citep{chen2023importance, simplediffusion, SD3}, derived from pixel- and VAE-based inputs, neglect token dimensionality, motivating a dimensionality-dependent shift; (3) \emph{Decoder robustness:} unlike VAEs trained on continuous latent distributions~\citep{VAE}, RAE decoders learn from discretely supported latents but must reconstruct samples from a diffusion model that follow a continuous distribution, which we address by noise-augmented decoder training. Finally, we introduce a new DiT variant, \DDT{}, inspired by DDT~\citep{ddt} but motivated by a different design perspective. It augments the standard DiT architecture with a lightweight, \emph{shallow yet wide} head, enabling the diffusion model to scale in width without incurring quadratic computational costs. Empirically, this design further enhances diffusion transformer training in high-dimensional RAE spaces.

Empirically, RAEs demonstrate strong visual generation performance. On ImageNet, our RAE-based \DDT{} achieves FIDs of {1.51 at 256×256} without any guidance, and {1.13 at both 256×256 and 512×512} with AutoGuidance~\citep{AG}, showing the effectiveness of RAEs as an alternative to conventional VAEs in diffusion transformer training. More broadly, these results reframes autoencoding from a compression mechanism into a representation foundation, one that enables diffusion transformers to train more efficiently and generate more effectively.%

\begin{figure}[t]
    \centering
    \includegraphics[width=\linewidth]{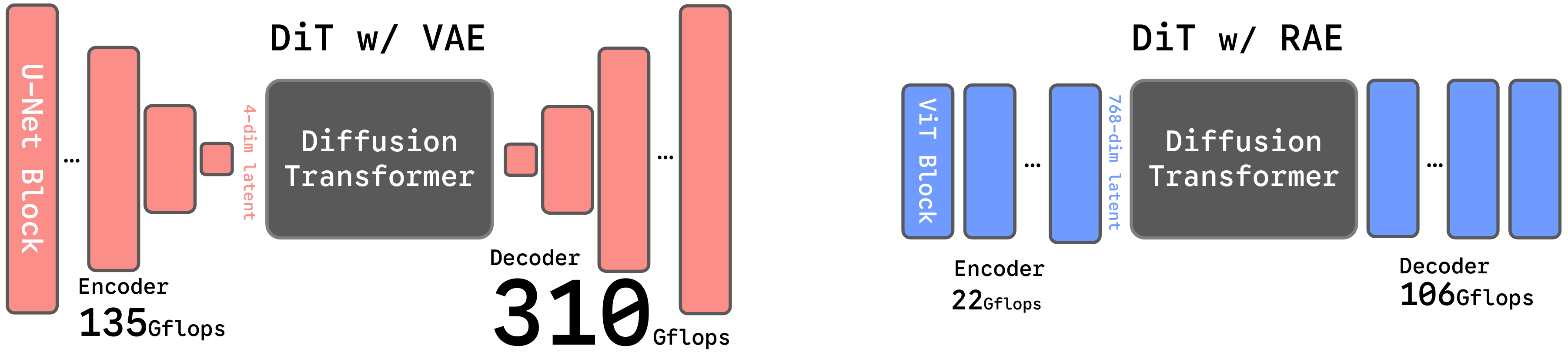}
    \caption{\textbf{Comparison of SD-VAE and RAE \scriptsize (\DinoB).} The VAE relies on convolutional backbones with aggressive down- and up-sampling, while the \RAE{} uses a ViT architecture \textit{without} compression. SD-VAE is also more computationally expensive, requiring about 6× and 3× more GFLOPs than \RAE{} for the encoder and decoder, respectively. GFlops are evaluated on one $256\times 256$ image.}
    \label{fig:VAE_vs_RAE}
    \vspace{-.5em}
\end{figure}

\section{Related works}
\label{sec: related works}

Here, we discuss previous work on the line of representation learning and reconstruction/generation. We present a more detailed related work discussion in~\Cref{apd:extend_related_work}.

\textbf{Representation for Reconstruction.} 
Recent work explores enhancing VAEs with semantic representations: VA-VAE~\citep{lgt} aligns VAE latents with a pretrained representation encoder, while MAETok~\citep{maetok}, DC-AE 1.5~\citep{dcae1p5}, and l-DEtok~\citep{ldetok} incorporate MAE- or DAE~\citep{DAE}-inspired objectives into VAE training. Such alignment greatly improves reconstruction and generation performance of VAEs, yet its reliance on heavily compressed, low-dimensional latents still limits both reconstruction fidelity and representation quality. In contrast, we reconstruct directly from \SEM{} features without compression. We show that, with a simple ViT decoder on top of frozen \SEM{} features, it achieves reconstruction quality comparable to or better than SD-VAE~\citep{LDM}, while preserving substantially stronger representations.

\textbf{Representation for Generation.}
Recent work also explores using semantic representations to improve generative modeling.
REPA~\citep{repa} accelerates DiT convergence by aligning its middle block with \SEM{} features. DDT~\citep{ddt} further improves convergence by decoupling DiT into an encoder–decoder and applying REPA loss to the encoder output. REG~\citep{reg} introduces a learnable token into the DiT sequence and explicitly aligns it with a \SEM{} representation. ReDi~\citep{redi} generates both VAE latents and PCA components of DINOv2 features within a diffusion model. In contrast, we train diffusion models directly on \SEM{} and achieve faster convergence. We also recommend ~\citep{dieleman2025latents} for a broader perspective on this topic.
\vspace{-0.5em}

\section{High Fidelity Reconstruction from Frozen Encoders}\label{sec:rae}
\vspace{-0.5em}
In this section, we challenge the common assumption that pretrained \SEM{}, such as \Dinovt~\citep{Dinov2} and \Siglipt~\citep{siglip2}, are unsuitable for the reconstruction task because they \emph{``emphasize high-level semantics while downplaying low-level details''}~\citep{unilip, titok}. We show that, with a properly trained decoder, frozen \SEM{} can in fact serve as strong encoders for the diffusion latent space. Our \textbf{Representation Autoencoders (RAE)} pair frozen, pretrained \SEM{} with a ViT-based decoder, yielding reconstructions on par with or even better than SD-VAE. More importantly, RAEs alleviate the fundamental limitations of VAEs~\citep{VAE}, whose heavily compressed latent space (e.g. SD-VAE maps $256^2$ images to $32^2\times4$~\citep{VQGAN,LDM} latents) restricts reconstruction fidelity and more importantly, representation quality.

Our training recipe for the RAE decoder is as follows.
Given an input $\rvx \in \mathbb{R}^{3\times H\times W}$ and the frozen representation encoder $E$ with patch size $p_e$ and hidden size $d$, we obtain $N = HW/p_e^2$ tokens with channel $d$. A ViT decoder $D$ with patch size $p_d$ maps them back to pixels with shape $3\times H\frac{p_d}{p_e}\times W \frac{p_d}{p_e}$; By default we use $p_d = p_e$, so the reconstruction matches the input resolution.  For all experiments on $256\times256$ images, the encoder produces 256 tokens, matching the token count of most prior DiT-based models trained with SD-VAE latents~\citep{dit,repa,sit}. The decoder $D$ is trained with a combination of L1, LPIPS~\citep{lpips}, and adversarial losses~\citep{GAN}, following common practice in VAEs:
\begin{align*}
 z = E(x), \hat{x} &= D(z) \\
 \mathcal{L}_{rec}(x) = \omega_L \operatorname{LPIPS}(\hat{x}, x) + &\operatorname{L1}(\hat{x}, x) + \omega_G \lambda \operatorname{GAN}(\hat{x}, x),
\end{align*}

We provide implementation details about the decoder architecture, hyperparameters, coefficients, and GAN training details in Appendix~\ref{apd:rae_implementation}.

We select three representative encoders from different pretraining paradigms: \DinoB~\citep{Dinov2}~{\scriptsize($p_e$=14, $d$=768)}, a self-supervised self-distillation model; \Siglipt-B~\citep{siglip2}~{\scriptsize($p_e$=16, $d$=768)}, a language-supervised model; and \MAE-B~\citep{MAE}~{\scriptsize($p_e$=16, $d$=768)}, a masked autoencoder. For \Dinovt{}, we also study different model sizes S,B,L~{\scriptsize($d$=384,768,1024)}. Unless otherwise specified, we use an \DXL decoder for all RAEs. We use FID score~\citep{fid} computed on the reconstructed ImageNet~\citep{imagenet} validation set as our main metric for reconstruction quality, denoted as rFID.

\begin{table*}[t]
\label{tab:recon_results}
\centering
{\captionsetup[subfloat]{labelfont=scriptsize}
\vspace{-40pt}
\subfloat[
\scriptsize \textbf{Encoder choice.} All encoders outperform SD-VAE.
\label{tab:recon_deterministic}
]
{\begin{minipage}[t]{0.21\linewidth}{
    \vspace{0pt}
    \begin{center}
    \tablestyle{4pt}{1.1}
        \begin{tabular}{lc}
        \toprule
        Model & rFID \\
        \midrule
        \rowcolor{gray!20}\DinoB & 0.49 \\
        \SiglipB & 0.53 \\
        \MAEB & 0.16 \\
        \midrule
        SD-VAE & \gc{0.62} \\
        \bottomrule
        \end{tabular}
    \end{center}
}\end{minipage}
}
\hspace{1em}
\subfloat[
\scriptsize \textbf{Larger decoders} improve rFID while remaining much more efficient than VAEs.
\label{tab:decoder_scaling_no_noise}
]
{\begin{minipage}[t]{0.21\linewidth}{
    \vspace{0pt}
    \begin{center}
    \tablestyle{4pt}{1.1}
    \begin{tabular}{lcc} 
        \toprule
        Decoder & rFID & GFLOPs \\
        \midrule
        \DB & 0.58 & 22.2 \\
        \DL & 0.50 & 78.1 \\
        \rowcolor{gray!20} \DXL & 0.49 & 106.7 \\
        \midrule
        SD-VAE & \gc{0.62} & \gc{310.4} \\
        \bottomrule
    \end{tabular} 
\end{center}}\end{minipage}
}
\hspace{1em}
\subfloat[
\scriptsize \textbf{Encoder scaling.}  rFID is stable across \RAE{} sizes.
\label{tab:encoder_scaling_dinov2}
]{
\begin{minipage}[t]{0.21\linewidth}{
    \vspace{0pt}
    \begin{center}
    \tablestyle{4pt}{1.1}
        \begin{tabular}{lc}
        \toprule
        Encoder & rFID \\
        \midrule
        \DinoS & \round{0.52} \\
        \rowcolor{gray!20} \DinoB & 0.49 \\
        \DinoL & \round{0.52} \\
        \bottomrule
        \multicolumn{2}{c}{~} \\
        \multicolumn{2}{c}{~} \\
        \end{tabular}
        \vspace{-0.1cm}
    \end{center}
}\end{minipage}
}
\hspace{1em}
\subfloat[
\scriptsize \textbf{Representation quality.} RAEs have much higher linear probing accuracy than VAEs.
\label{tab:linear_probing}
]
{\begin{minipage}[t]{0.21\linewidth}{
    \vspace{0pt}
    \begin{center}
    \tablestyle{4pt}{1.1}
        \begin{tabular}{lc} 
            \toprule
            Model & Top-1 Acc. \\
            \midrule
           \rowcolor{gray!20} \DinoB & 84.5 \\
            \SiglipB & 79.1 \\
            \MAEB & 68.0 \\
            \midrule
            SD-VAE & \gc{8.0} \\
            \bottomrule
        \end{tabular}
    \end{center}
}
\end{minipage}
}
}
\vspace{-4pt}
\caption{
RAEs consistently outperform SD-VAE in reconstruction (rFID) and representation quality (linear probing accuracy) on ImageNet-1K, while being more efficient. If not specified, we use \DXL as the decoder and \DinoB as the encoder for \RAE{}. Default settings in this paper are in \sethlcolor{gray!20}\hl{gray}. }
\end{table*}

\paragraph{Reconstruction, scaling, and representation.}
As shown in Table~\ref{tab:recon_deterministic}, RAEs with frozen encoders achieve consistently better reconstruction quality (rFID) than SD-VAE. For instance, RAE with \MAE-B/16 reaches an rFID of 0.16, clearly outperforming SD-VAE and challenging the assumption that \SEM{} cannot recover pixel-level detail.

We next study the scaling behavior of both encoders and decoders. As shown in~\Cref{tab:encoder_scaling_dinov2}, reconstruction quality remains stable across \Dinovt-S, B, and L, indicating that even small \SEM{} models preserve sufficient low-level detail for decoding. On the decoder side (\Cref{tab:decoder_scaling_no_noise}), increasing capacity consistently improves rFID: from 0.58 with \DB to 0.49 with \DXL. Importantly, \DB already outperforms SD-VAE while being $14\times$ more efficient in GFLOPs, and \DXL further improves quality at only one-third of SD-VAE’s cost. 

We also evaluate representation quality via linear probing on ImageNet-1K in~\Cref{tab:linear_probing}. Because RAEs use frozen pretrained encoders, they directly inherit the representation of the underlying \SEM{}. Since RAEs use frozen pretrained encoders, they retain the strong representations of the underlying \SEM{}. In contrast, SD-VAE achieves only $\sim$8\% accuracy.

\section{Taming Diffusion Transformers for \RAE{}}
\label{sec:theory}

With RAE demonstrating good reconstruction quality, we now proceed to investigate the \emph{diffusability}~\citep{ImprovDiffus} of its latent space; that is, how easily its latent distribution can be modeled by a diffusion model, and how good the generation performance can be. 

Before turning to generation, we fix the encoder to study generation capabilities. \Cref{tab:recon_deterministic} shows that \MAE, \Siglipt, and \Dinovt all achieve lower reconstruction rFID than SD-VAE, with \MAE the best among them. However, reconstruction alone does not determine generation quality. Empirically, \Dinovt produces the strongest generation results, so unless otherwise noted, we adopt \Dinovt as our default encoder and defer the full comparison to \Cref{apd:gen_perf_of_encoder}.

Following standard practice, we adopt the flow matching objective~\citep{fm, rf} with linear interpolation $\rvx_t = (1-t)\rvx + t\bm{\varepsilon}$,
where $\rvx \sim p(\rvx)$ and $\bm{\varepsilon} \sim \gN(0, \mathbf{I})$, and train the model to predict the velocity $v(\rvx_t, t)$ (see~\Cref{app:flow-background}). We use LightningDiT~\citep{lgt}, a variant of DiT~\citep{dit}, as our model backbone.
We adopt a patch size of 1, which results in a sequence length of 256 for all \RAE{}s on $256\times256$ images, matching the token length used by VAE-based DiTs~\citep{dit,repa,lgt}.
Since the computational cost of DiT depends primarily on the sequence length, \textbf{using \DiT on \RAE{} therefore effectively incurs no additional overhead compared to its VAE-based counterparts}.
We evaluate our models using FID computed on 50K samples generated with 50 steps with the Euler sampler (denoted as gFID), and all quantitative results are trained for 80 training epochs on ImageNet at $256\times256$ unless otherwise specified. More training details are included in~\Cref{apd:diffusion_details}.

\begin{wraptable}{r}{0.34\linewidth}
    \vspace{-0.3cm}
    \begin{minipage}{\linewidth}
    \centering
    \scalebox{1.0}{
        \begin{tabular}{lcc}
        \toprule
         & RAE & SD-VAE \\
        \midrule
        \DiTS  & 215.76 & \textbf{51.74} \\
        \DiTXL & 23.08  & \textbf{7.13}  \\ 
        \bottomrule
        \end{tabular}
    }
         \caption{\small \textbf{Standard DiT struggles to model RAE's latent distribution.}}
        \label{tab:diffusion-fail}
    \end{minipage}
        
    \vspace{-0.5cm}
\end{wraptable}

\textbf{DiT does not work out of the box.}
To our surprise, the standard diffusion recipe fails with RAE (see \Cref{tab:diffusion-fail}). Training directly on RAE latents causes a small backbone such as DiT-S to completely fail, while a larger backbone like \DiTXL significantly underperforms it's counterpart with the SD-VAE latents.

To investigate this observation, we raise several hypotheses detailed below, which we will discuss in the following sections:

\finding{2}{\begin{itemize}[leftmargin=*]
    \item \textbf{Suboptimal design for diffusion transformers.} When modeling high-dimensional RAE tokens, the optimal design choices for diffusion transformers can diverge from those of the standard DiT, which was originally tailored for low-dimensional VAE tokens.
    \item \textbf{Suboptimal noise scheduling.} Prior noise scheduling and loss re-weighting tricks are derived for image-based or VAE-based input, and it remains unclear if they transfer well to high-dimension semantic tokens. 
    \item \textbf{Diffusion generates noisy latents.} VAE decoders are trained to reconstruct images from noisy latents, making them more tolerant to small noises in diffusion outputs. In contrast, RAE decoders are trained on only clean latents and may therefore struggle to generalize.
\end{itemize} }

\subsection{Scaling DiT Width to Match Token Dimensionality}

\begin{figure}[t!]
    \centering
    \begin{minipage}{0.54\linewidth}
        \centering
        \includegraphics[width=\linewidth]{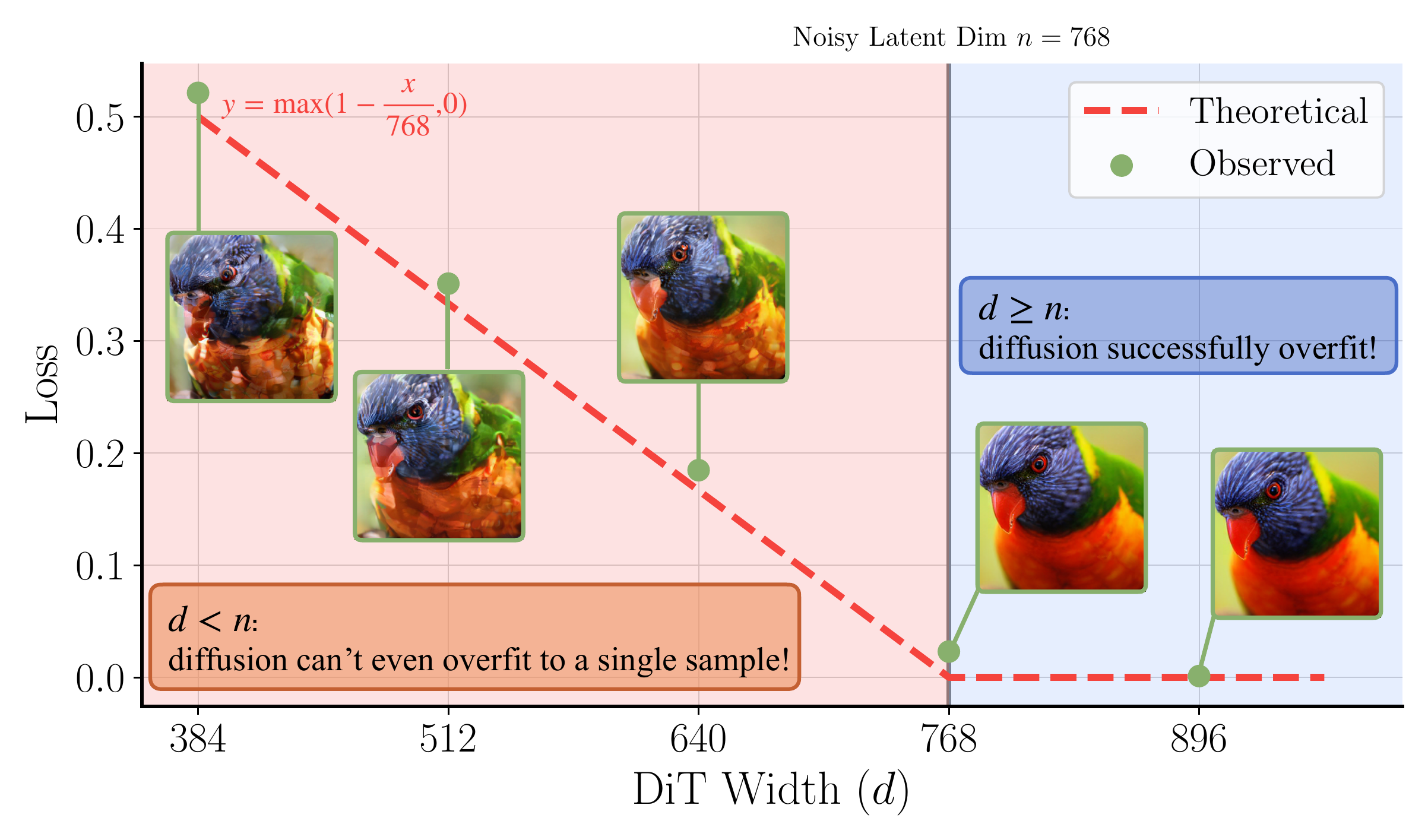}
    \end{minipage}
    \hfill
    \begin{minipage}{0.44\linewidth}
        \centering
        \includegraphics[width=\linewidth]{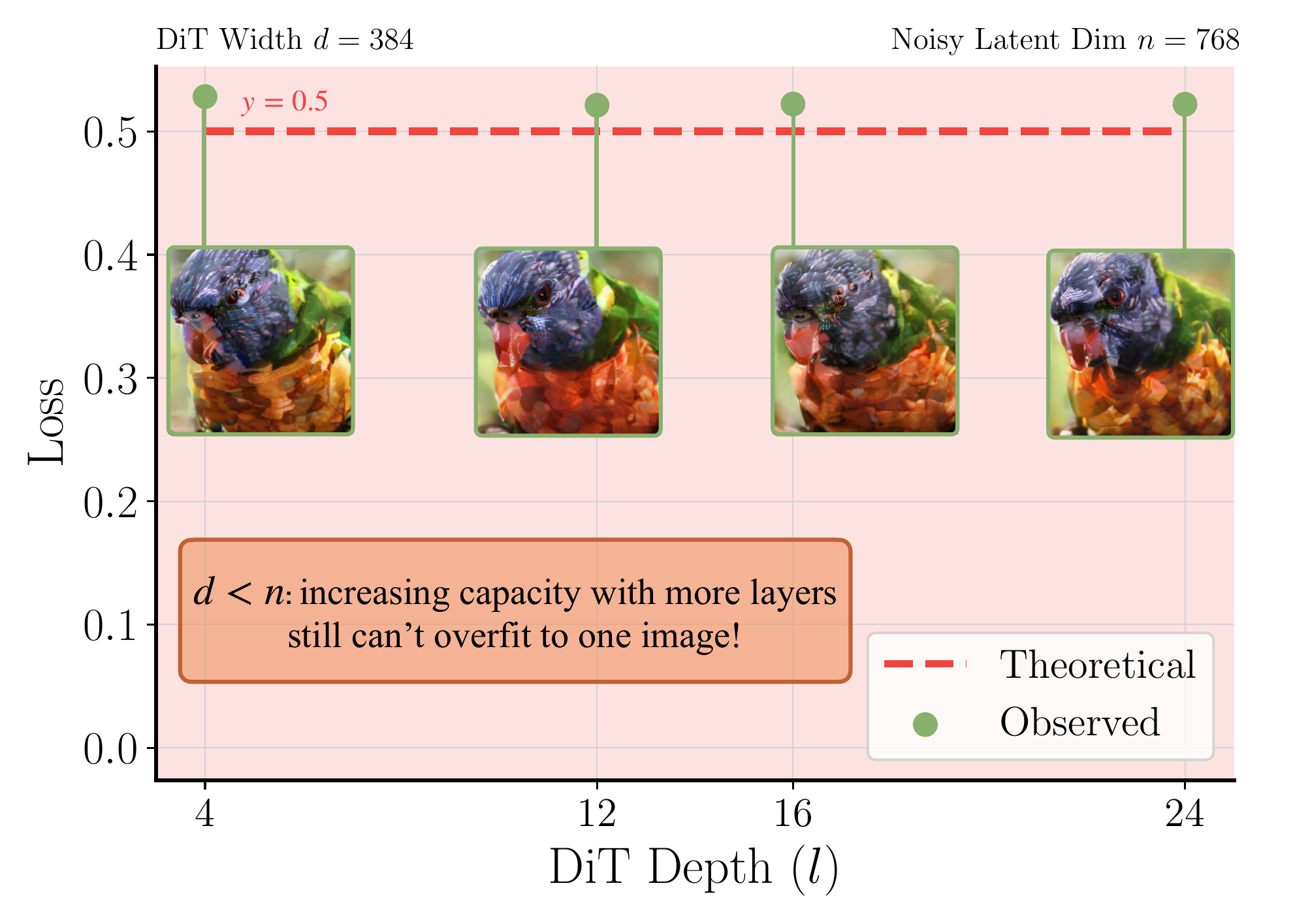}
    \end{minipage}
    \caption{\textbf{Overfitting to a single sample.} Left: increasing model width lead to lower loss and better sample quality; Right: changing model depth has marginal effect on overfitting results.}
    \label{fig:overfit}
    \vspace{-.5em}
\end{figure}

To better understand the training dynamics of diffusion transformers with RAE latents, we first construct a simplified experiment. Rather than training on the entire ImageNet, we randomly select \textbf{a single image}, encode it by RAE, and test whether the diffusion model can \emph{reconstruct} it.

\Cref{tab:diffusion-fail} shows that although \RAE{} underperforms SD-VAE, DiT performance improves with increased capacity. To dissect this effect, we vary model width while fixing depth. Using a DiT-S, we increase the hidden dimension from $384$ to $784$. As shown in~\Cref{fig:overfit}, sample quality is poor when the model width $d < $ token dimension $n=768$, but improves sharply and reproduces the input almost perfectly once $d \geq n$. Training losses exhibit the same trend, converging only when $d \geq n$.

One might suspect that this improvement still arises from the larger model capacity. To disentangle this effect, we fix the width at $d = 384$ and vary the depth of the SiT-S model. As shown in~\Cref{fig:overfit}, even when doubling the depth from $12$ to $24$, the generated images remain artifact-heavy, and the training losses shown in~\Cref{fig:overfit} fail to converge to similar level of $d = 768$.

Together, the results indicate that for generation in RAE’s latent space to succeed, \textbf{the diffusion model’s width must match or exceed the RAE’s token dimension}. At first glance, this appears to contradict the common belief that data manifolds have low intrinsic dimensionality~\citep{pope2021intrinsic}, allowing generative models such as GANs to operate effectively within that manifold without scaling to the full data dimension~\citep{styleganxl}. We argue that this contradiction arises from the formulation of diffusion models (\Cref{app:flow-background}): injecting Gaussian noise directly to the data (e.g., in the construction of $x_t$) throughout training effectively extends the data distribution’s support to the entire space, thereby ``diffusing'' the data manifold into a full-rank one, requiring model capacity that scales with the full data dimensionality.

In the following, we provide a theoretical justification for this conjecture:

\begin{restatable}{theorem}{TrainBound}
    \label{thm:lower-bound}
    Assuming $\rvx \sim p(\rvx) \in \sR^n, \bm{\varepsilon} \sim \gN(0, \mathbf{I}_n)$, $t\in[0, 1]$. Let $\rvx_t = (1 - t) \rvx + t \bm{\varepsilon}$, consider the function family \begin{align}
        \gG_d = \{g(\rvx_t, t) = \mB f(\mA \rvx_t, t): \mA \in \sR^{d \times n}, \mB \in \sR^{n \times d}, f: [0, 1] \times \sR^d \to \sR^d\}
    \end{align}
    where $d < n$, $f$ refers to a stack of standard DiT blocks whose width is smaller than the token dimension from the representation encoder, and $\mA, \mB$ denote the input and output linear projections, respectively. Then for \textbf{any} $g \in \gG_d$, \begin{align}
        \mathcal{L}(g, \theta) &= \int_0^1 \E_{\rvx \sim p(\rvx), \bm{\varepsilon}\sim \gN(0, \mathbf{I}_n)}\big[\Vert g(\rvx_t, t) - (\bm{\varepsilon} - \rvx) \Vert^2\big]\mathrm{d}t \geq \sum_{i = d + 1}^n \lambda_i
    \end{align}
    where $\lambda_i$ are the eigenvalues of the covariance matrix of the random variable $W = \bm{\varepsilon} - \rvx$. 
    
    Notably, when $d \geq n$, $\gG_d$ contains the unique minimizer to $\gL(g, \theta)$.
\end{restatable}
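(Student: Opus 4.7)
The key observation is that for every choice of $(\mA, \mB, f)$, the output $g(\rvx_t, t) = \mB f(\mA \rvx_t, t)$ is forced to lie in the column space $V := \mathrm{col}(\mB)$, which has dimension at most $d$. This bottleneck is imposed by the output projection alone; neither the nonlinearity of $f$ nor any information preserved by $\mA$ can help the predictor escape $V$. The strategy is therefore to bound the loss below by the best approximation to the target $W = \bm{\varepsilon} - \rvx$ from within an arbitrary $d$-dimensional subspace, and then to minimize this bound over subspaces via a standard eigenvalue inequality.

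\textbf{Main steps.} First, for each realization of $(\rvx, \bm{\varepsilon}, t)$, decompose the error with the orthogonal projection $P_V$ onto $V$ and $P_{V^\perp}$ onto its complement. Since $g(\rvx_t, t) - P_V W \in V$ while $P_{V^\perp} W \in V^\perp$, Pythagoras gives $\Vert g(\rvx_t, t) - W \Vert^2 = \Vert g(\rvx_t, t) - P_V W \Vert^2 + \Vert P_{V^\perp} W \Vert^2 \geq \Vert P_{V^\perp} W \Vert^2$. Second, take expectation over $(\rvx, \bm{\varepsilon})$ and integrate over $t \in [0,1]$; because $W$ does not depend on $t$, one obtains $\gL(g, \theta) \geq \E \Vert P_{V^\perp} W \Vert^2$. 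Third, expand this expectation as $\mathrm{Tr}(P_{V^\perp} \Sigma) + \Vert P_{V^\perp} \E[W] \Vert^2 \geq \mathrm{Tr}(P_{V^\perp} \Sigma)$, where $\Sigma = \mathrm{Cov}(W)$. Fourth, invoke the Ky Fan / Poincar\'e separation theorem: the minimum of $\mathrm{Tr}(P_U \Sigma)$ over subspaces $U$ of dimension $n - d$ equals the sum of the smallest $n - d$ eigenvalues of $\Sigma$, namely $\sum_{i = d+1}^{n} \lambda_i$. Since $\dim(V^\perp) \geq n - d$, this lower bound transfers to $P_{V^\perp}$ and completes the inequality.

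For the concluding remark that $\gG_d$ contains the unique minimizer when $d \geq n$, I would choose $\mA \in \sR^{d \times n}$ and $\mB \in \sR^{n \times d}$ to have full rank $n$ (e.g., $\mA$ an isometric embedding and $\mB$ a left inverse), so that $\mA \rvx_t$ retains all information about $\rvx_t$. Then one may select $f$ to implement, on the image of $\mA$, the Bayes-optimal velocity $g^\star(\rvx_t, t) = \E[\bm{\varepsilon} - \rvx \mid \rvx_t, t]$, which is the unique $L^2$ minimizer of the flow-matching loss and belongs to $\gG_d$ by construction.

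\textbf{Main obstacle.} The calculation itself is short; the conceptual hurdle is to recognize that the bound is driven entirely by the dimension of the \emph{output} subspace $\mathrm{col}(\mB)$, so that no amount of nonlinearity in $f$ or cleverness in the input projection $\mA$ can overcome the fact that the predictor is confined to a $d$-dimensional slice of $\sR^n$ while the target $W$ has variance spread over all $n$ directions. Once this structural point is identified, the remainder reduces to an elementary $L^2$ projection argument combined with a standard variational characterization of eigenvalue sums.
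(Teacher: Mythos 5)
Your proposal is correct and follows essentially the same route as the paper's proof: confine the predictor to the at-most-$d$-dimensional column space of $\mB$, lower-bound the squared error by the energy of the target in the orthogonal complement, and invoke the Ky Fan variational principle, with the $d\geq n$ case handled by realizing the Bayes-optimal velocity $\E[\bm{\varepsilon}-\rvx\mid \rvx_t]$ inside $\gG_d$. If anything, your version is marginally cleaner, since projecting onto $\mathrm{col}(\mB)$ (rather than the possibly non-linear image of $g$) and keeping the $\Vert P_{V^\perp}\E[W]\Vert^2$ term explicit avoids the paper's without-loss-of-generality centering assumption.
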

\vspace{-1.2em}
\begin{proof}
    See~\Cref{app:proof-train}.
\end{proof}
\vspace{-1em}

In our toy setting where $p(\rvx) = \delta(\rvx - \rvx_0)$, we have $W \sim \gN(-\rvx, \mathbf{I}_n)$ and $\lambda_i = 1$ for all $i$. Thus by Theorem~\ref{thm:lower-bound}, the lower bound of the average loss becomes $\tilde{\gL}(g, \theta) \geq \frac{1}{n} \sum_{i = d + 1}^n 1 = \frac{n-d}{n}$. As shown in~\Cref{fig:overfit}, this theoretical bound is consistent with our empirical results.

\begin{wraptable}{r}{0.42\linewidth}
    \vspace{-0.3cm}
    \begin{minipage}{\linewidth}{
        \begin{center}
            \setlength{\tabcolsep}{2pt} 
            \scalebox{0.9}{
            \begin{tabular}{lccc}
            \toprule
            & \LGT-S & \LGT-B & \LGT-L  \\
            \midrule
            \DinoS & $3.6\mathrm{e}{-2}$ \GOOD & $1.0\mathrm{e}{-3}$ \GOOD & $9.7\mathrm{e}{-4}$ \GOOD  \\
            \DinoB   & $5.2\mathrm{e}{-1}$ \BAD & $2.4\mathrm{e}{-2}$ \GOOD & $1.3\mathrm{e}{-3}$ \GOOD  \\
            \DinoL   & $6.5\mathrm{e}{-1}$ \BAD & $2.7\mathrm{e}{-1}$ \BAD & $2.2\mathrm{e}{-2}$ \GOOD  \\
            \bottomrule
            \end{tabular}
            }
            \vspace{-0.2cm}
            \caption{\small \textbf{Overfitting losses.} Compared between different combinations of model width and token dimension.}
            \label{tab:overfit-loss}
        \end{center}
    }
    \end{minipage}
    \vspace{-0.2cm}
\end{wraptable}

We further extend our investigation to a more practical setting by examining three models of varying width, \{DiT-S, DiT-B, DiT-L\}. Each model is overfit on a single image encoded by \{DINOv2-S, DINOv2-B, DINOv2-L\}, respectively, corresponding to different token dimensions. As shown in~\Cref{tab:overfit-loss}, convergence occurs only when the model width is at least as large as the token dimension (e.g., DiT-B with DINOv2-B), while the loss fails to converge otherwise (e.g., DiT-S with DINOv2-B).

\finding{2}{\begin{itemize}[leftmargin=*]
    \item \textbf{Suboptimal design for diffusion transformers.} 
    \textit{We now fix the width of DiT to be at least as large as the RAE token dimension. For RAE with the \DinoB encoder, we pair it with \DiT-XL in our following experiments.}
\end{itemize}}

\subsection{Dimension-Dependent Noise Schedule Shift}

Many prior works~\citep{teng2023relay, chen2023importance, simplediffusion, SD3} have observed that, for inputs $\rvz \in \sR^{C\times H\times W}$, increasing the spatial resolution ($H \times W$) reduces information corruption at the same noise level, impairing diffusion training. These findings, however, are based mainly on pixel- or VAE-encoded inputs with few channels (e.g., $C \leq 16$). In practice, the Gaussian noise is applied to both spatial and channel dimensions; as the number of channels increases, the effective “resolution” per token also grows, reducing information corruption further. We therefore argue that proposed resolution-dependent strategies in these prior works should be generalized to the \textit{effective data dimension}, defined as the number of tokens times their dimensionality.

\begin{wraptable}{r}{0.33\linewidth}
    \centering
    \scalebox{1.0}{
    \begin{tabular}{lcc}
        \toprule
          & gFID  \\
        \midrule
        w/o shift  & \round{23.0750453}  \\
        \rowcolor{gray!20} w/ shift    & 4.81  \\
        \bottomrule
        \end{tabular}
    }
    \caption{\small \textbf{Impact of schedule shift.}}
    \label{tab:schedule-shift}
    \vspace{-0.3cm}
\end{wraptable}

We adopt the shifting strategy of~\citet{SD3}: for a schedule $t_n \in [0,1]$ and input dimensions $n,m$, the shifted timestep is defined as $t_m = \frac{\alpha t_n}{1 + (\alpha-1) t_n}$ where $\alpha = \sqrt{m/n}$ is a dimension-dependent scaling factor. We follow~\citep{SD3} in using $n=4096$
as the base dimension and set $m$ to the effective data dimension of RAE. As shown in~\Cref{tab:schedule-shift}, this yields significant performance gains, showing its importance for training diffusion models in the high-dimensional RAE latent space.

\vspace{-0.1cm}
\finding{2}{\begin{itemize}[leftmargin=*]
    \item \textbf{Suboptimal noise scheduling.} \textit{We now default the noise schedule to be dependent on the effective data dimension for all our following experiments.}
\end{itemize}}

\subsection{Noise-Augmented Decoding}

Unlike VAEs, where latent tokens are encoded as a continuous distribution $\gN(\mu, \sigma^2\mathbf{I})$~\citep{VAE}, the RAE decoder $D$ is trained to reconstruct images from the discrete distribution $p(\rvz) = \sum_i \delta(\rvx - \rvz_i)$, where $\{\rvz_i\}$ denotes the training set processed by the RAE encoder $E$. At inference time, however, the diffusion model may generate latents that are noisy or deviate slightly from the training distribution due to imperfect training and sampling~\cite{abuduweili2024enhancing}. This could introduce a notable out-of-distribution challenge for $D$, which degrades sampling quality.

To mitigate this issue, inspired by prior works on Normalizing Flows~\citep{dinh2016density, ho2019flow++, zhai2024normalizing}, we augment the RAE decoder training with an additive noise $\rvn \sim \gN(0, \sigma^2 \mathbf{I})$. Concretely, rather than decoding directly from the clean latent distribution $p(\rvz)$, we train $D$ on a smoothed distribution $p_{\rvn}(\rvz) = \int p(\rvz - \rvn)\gN(0, \sigma^2\mathbf{I})(\rvn)\mathrm{d}\rvn$ to enhance the decoder's generalization to the denser output space of diffusion models. We further introduce stochasticity into $\sigma$ by sampling it from $|\gN(0, \tau^2)|$, which helps regularize training and improve robustness.

\begin{wraptable}{l}{0.33\linewidth}
    \centering
    \scalebox{0.8}{
    \begin{tabular}{lcc}
        \toprule
         & gFID & rFID  \\
        \midrule
        $\rvz \sim p(\rvz)$  & 4.81 & 0.49  \\
        \rowcolor{gray!20} $\rvz \sim p_{\rvn}(\rvz)$   & 4.28 & 0.57  \\
        \bottomrule
        \end{tabular}
    }
    \caption{\small \textbf{Impact of $p_\rvn(\rvz)$.}}
    \label{tab:noise-aug}
    \vspace{-0.2cm}
\end{wraptable}

We analyze how $p_\rvn(\rvz)$ affects reconstruction and generation. As shown in~\Cref{tab:noise-aug}, it improves gFID but slightly worsens rFID. This trade-off is expected: adding noise smooths the latent distribution and, therefore, helps reduce OOD issues for the decoder, but also removes fine details, lowering reconstruction quality. We conduct more ablation experiments on $\tau$ and different encoders in~\Cref{apd:robust_decoding_ablation}.
\\

\finding{2}{\begin{itemize}[leftmargin=*]
    \item \textbf{Diffusion generates noisy latents.} \textit{We now adopt the noise-augmented decoding for all our following experiments.}
\end{itemize}}
\fancybreak
\begin{wrapfigure}{r}{0.48\linewidth}
    \centering
    \includegraphics[width=\linewidth]{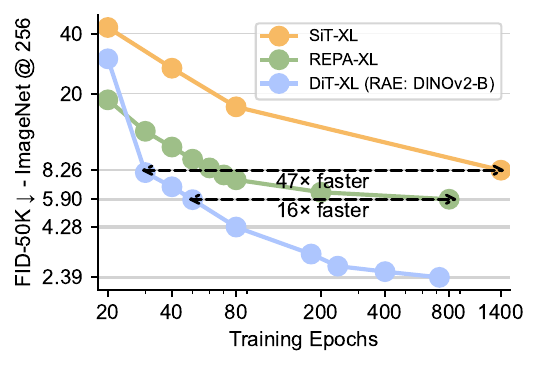}
    \vspace{-0.5cm}
     \caption{\small \textbf{DiT w/ \RAE{} reaches much faster convergence and better FID than SiT or REPA.}}
     \label{fig:compare_convergence}
     \vspace{-0.5cm}
\end{wrapfigure}

We combine all of the above techniques to train a DiT-XL model on \RAE{} latents. Our improved diffusion recipe achieves a gFID of \textbf{4.28} (\Cref{fig:compare_convergence}) after only 80 epochs and \textbf{2.39} after 720 epochs in RAE’s latent space. With same model size, this not only surpasses prior diffusion baselines (e.g., SiT-XL~\citep{sit}) trained on VAE latents (achieving a $47\times$ training speedup), but also outperforms the convergence speed of recent methods based on representation alignment (e.g., REPA-XL~\citep{repa}), achieving a $16 \times$ training speedup. In the following sections, we investigate ways to make RAE generation more efficient and effective, pushing it toward state-of-the-art performance.

\vspace{0.4cm}

\section{Improving the Model Scalability with Wide Diffusion Head}
\label{sec:ddt}

As discussed in~\Cref{sec:theory}, within the standard DiT framework, handling higher-dimensional RAE latents requires scaling up the width of the entire backbone, which quickly becomes computationally expensive. To overcome this limitation, we draw inspiration from DDT~\citep{ddt} and introduce the DDT head—a \textit{shallow yet wide} transformer module dedicated to denoising. By attaching this head to a standard DiT, we effectively increase model width without incurring quadratic growth in FLOPs. We refer to this augmented architecture as \DDT{} throughout the remainder of the paper.
We also conduct experiment of the design choice of DDT head in~\Cref{apd:ddt_head_ablation}

\begin{wrapfigure}[6]{l}{0.32\linewidth}
    \vspace{-0.3cm}
    \centering
    \includegraphics[width=\linewidth]{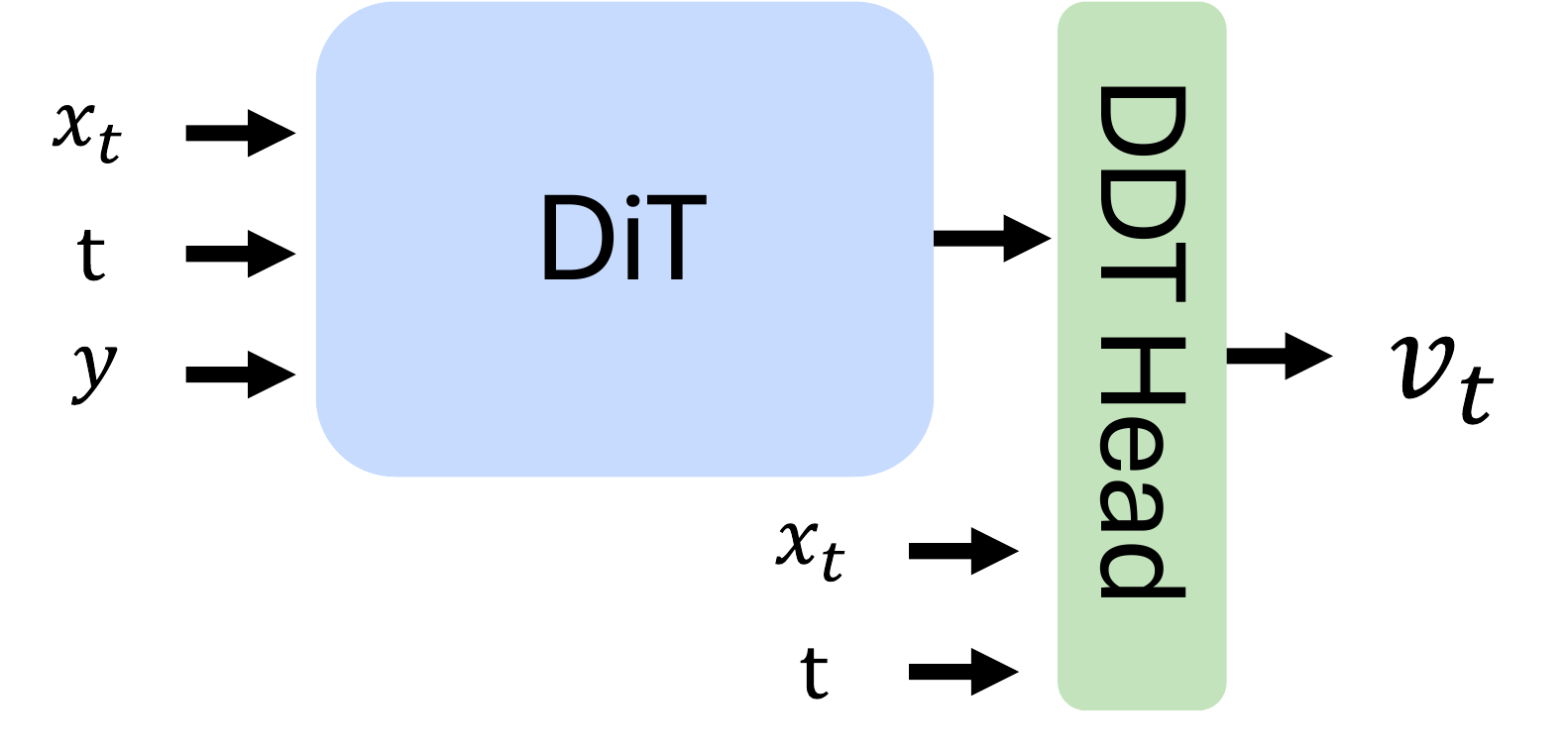}
     \caption{\small \textbf{The Wide DDT Head.}}
        \label{fig:ddt_Arch}
    \vspace{-0.3cm}
\end{wrapfigure}
\noindent\textbf{Wide DDT head.} Formally, a \DDT{} model consists of a base \DiT{} $M$ and an additional wide, shallow transformer head $H$.
Given a noisy input $x_t$, timestep $t$, and an optional class label $y$, the combined model predicts the velocity $v_t$ as
\vspace{0.2cm}
\begin{align*}
    z_t &= M(x_t \mid t, y), \\
    v_t &= H(x_t \mid z_t, t),
\end{align*}

\begin{figure*}[t]
\centering
\subfloat[
\scriptsize \DDT scales much better than \LGT with RAE latents.
\label{fig:DiTvsDiTDH}
]
{\begin{minipage}{0.235\linewidth}
    \centering
    \vspace{-0.5cm}
    \includegraphics[width=\linewidth]{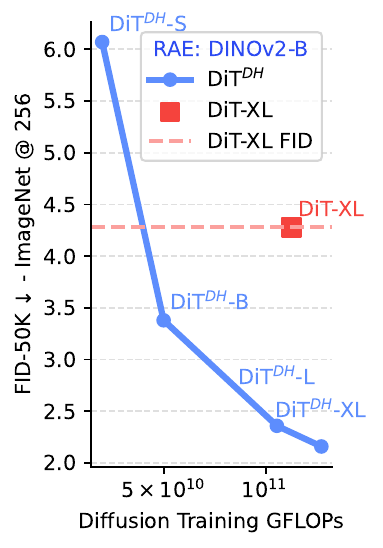}
    \label{fig:DiTvsDiTDH}
\end{minipage}
}
\hfill
\subfloat[
\scriptsize \DDT with RAE converges faster than VAE-based methods.
\label{fig:convergence_with_baselines}
]
{\begin{minipage}{0.235\linewidth}
        \centering
    \vspace{-0.5cm}
    \includegraphics[width=\linewidth]{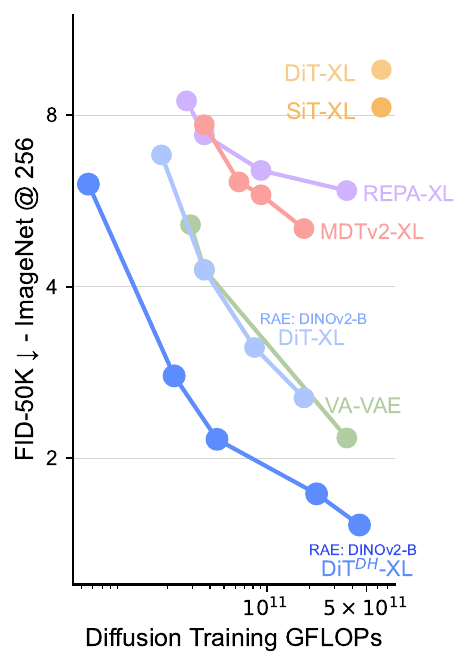}
    \label{fig:convergence_with_baselines}
\end{minipage}
}
\hfill
\subfloat[
\scriptsize \DDT with RAE reaches better FID than VAE-based methods at all model scales. Bubble area indicates the flops of the model.
\label{fig:scaling_with_baselines}
]
{\begin{minipage}{0.48\linewidth}
    \centering
    \vspace{-0.5cm}
    \includegraphics[width=\linewidth]{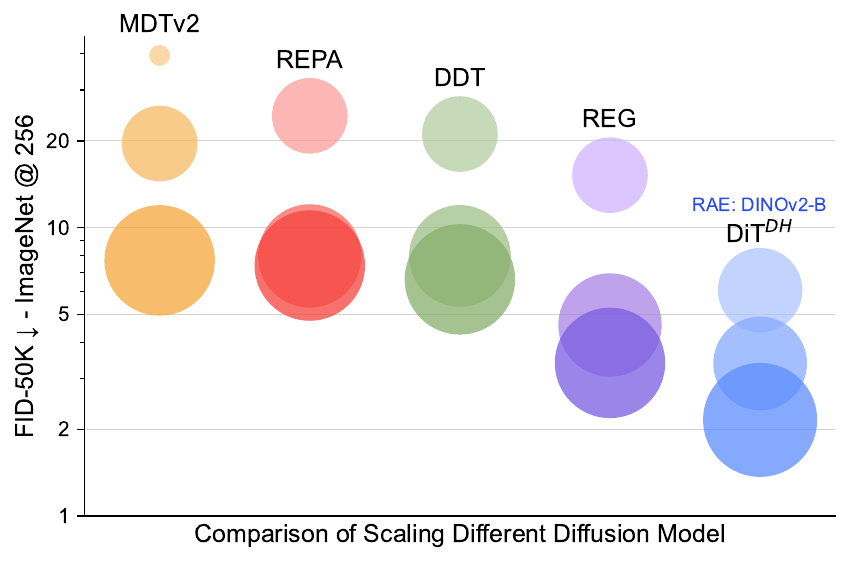}
    \vspace{0.2cm}
    \label{fig:scaling_with_baselines}
\end{minipage}
}
\caption{\textbf{Scalability of \DDT.} With RAE latents, \DDT scales more efficiently in both training compute and model size than RAE-based \DiT and VAE-based methods.}
\vspace{-0.cm}
\end{figure*} 

\noindent\textbf{\DDT{} converges faster than DiT.} We train a series of \DDT models with varying backbone sizes (\DDT-S, B, L, and XL) on RAE latents. We use a 2-layer, 2048-dim DDT head for all \DDT models. Performance is compared against the standard \DiT-XL baseline. As shown in~\Cref{fig:DiTvsDiTDH}, \DDT is substantially more FLOP-efficient than \DiT. For example, \DDT-B requires only $\sim$40\% of the training FLOPs yet outperforms \DiT-XL by a large margin; when scaled to \DDT-XL under a comparable training budget, \DDT achieves an FID of 2.16—nearly half that of \DiT-XL.

\begin{wraptable}{l}{0.32\linewidth}
    \begin{center}
        \scalebox{0.8}{
          \begin{tabular}{lccc}
            \toprule
            \multirow{3}{*}{Model}& \multicolumn{3}{c}{DINOv2} \\
            \cmidrule(lr){2-4}
            & S & B & L \\
            \midrule
            \DiTXL  & \round{3.502643111} & \round{4.276128739} & \round{6.086614197} \\
         \rowcolor{gray!20} \DDTXL  & \round{2.420268923} & \round{2.158400731} & \round{2.726684653} \\
         \bottomrule
        \end{tabular}
        }
        \caption{\small \textbf{\DDT outperforms \DiT  across \RAE{} encoder sizes.}}
        \label{tab:ddt_vs_dit_on_dino_sizes}
    \end{center}
\end{wraptable}

\noindent\textbf{\DDT maintains its advantage across RAE scales.}
We compare \DDT-XL and \DiT-XL on three RAE encoders—\DinoS, \DinoB, and \DinoL. As shown in~\Cref{tab:ddt_vs_dit_on_dino_sizes}, \DDT consistently outperforms \DiT, and the advantage grows with encoder size. 
For example, with \DinoL, \DDT improves FID from 6.09 to 2.73. 
We attribute this robustness to the DDT head. Larger encoders produce higher-dimensional latents, which amplify the width bottleneck of \DiT. \DDT addresses this by satisfying the width requirement discussed in~\Cref{sec:theory} while keeping features compact. It also filters out noisy information that becomes more prevalent in high-dimensional RAE latents.

\newpage
\subsection{State-of-the-Art Diffusion Transformers}
\vspace{-0.45em}
\quad
\begin{wraptable}{r}{0.47\linewidth}
        \vspace{-0.1cm}
        \centering
        \renewcommand{\arraystretch}{1.1}
        \setlength{\tabcolsep}{2pt} 
        \scalebox{0.72}{
        \begin{tabular}{lcccc}
            \toprule
            \multirow{2}{*}{\textbf{Method}} & \multicolumn{4}{c}{\textbf{Generation@512}}  \\
            \cmidrule(lr){2-5} 
            & \textbf{gFID}$\downarrow$  &  \textbf{IS}$\uparrow$  & \textbf{Prec.}$\uparrow$ & \textbf{Rec.}$\uparrow$ \\
            \midrule
            BigGAN-deep~{\small\citep{biggan}} & 8.43 & 177.9 & \textbf{0.88} & 0.29 \\
            StyleGAN-XL~{\small\citep{styleganxl}}  & 2.41 &  267.8 & 0.77 & 0.52 \\
            \arrayrulecolor{black!30}\midrule
            VAR~{\small\citep{VAR}} & 2.63 & 303.2 & - & - \\
            MAGVIT-v2~{\small\citep{magvitv2}} & 1.91 & \textbf{324.3} & - & - \\
            XAR~{\small\citep{xar}} & 1.70 & 281.5 & - & - \\
            \arrayrulecolor{black}\midrule
            ADM & 3.85 &  221.7 & 0.84 & 0.53\\
            SiD2   & 1.50   &    -     &   -  &   -  \\
            \arrayrulecolor{black!30}\midrule
            {DiT}  & {3.04} &  {240.8} & 0.84 & 0.54 \\
            {SiT} & 2.62 &   252.2 & 0.84 & 0.57 \\
            DiffiT~\small{\citep{diffit}}   & 2.67  &  252.1 & 0.83 & 0.55 \\
            {REPA}   & 2.08 &  274.6 & 0.83 & 0.58 \\
            DDT  & 1.28 &  305.1 & 0.80 & \textbf{0.63} \\
            EDM2~{\small\citep{edm2}} &1.25 & -& -& -\\
            \midrule
            \DDTXL(\DinoB)   & \bfseries 1.13 & 259.6 & 0.80 &\bfseries  0.63 \\
            \arrayrulecolor{black}\bottomrule
        \end{tabular}
        }
        \vspace{-5pt}
        \captionof{table}{\small \textbf{Class-conditional performance on ImageNet 512$\times$512 with guidance.} \DDT with 400-epoch training achieves an strong FID score of 1.13.}
        \label{tab:imagenet512_sota}
        \vspace{-0.7cm}
\end{wraptable}
\noindent\textbf{Convergence.} We compare the convergence behavior of \DDT-XL with previous state-of-the-art diffusion models~\citep{dit, sit, repa, MDTv2, lgt} in terms of FID without guidance. In~\Cref{fig:convergence_with_baselines}, we show the convergence curve of \DDT-XL with training epochs/GFLOPs, while baseline models are plotted at their reported final performance. \DDT-XL already surpasses REPA-XL, MDTv2-XL, and SiT-XL around $5 \times 10^{10}$ GFLOPs, and by $5 \times 10^{11}$ GFLOPs it achieves the best FID overall, requiring over 40$\times$ less compute.

\begin{table*}[t]
\vspace{-0.5cm}
\centering
\small
\renewcommand{\arraystretch}{1.1}
\setlength{\tabcolsep}{5pt} 
\makebox[\textwidth][c]{
\resizebox{1.00\textwidth}{!}{
\begin{tabular}{l c c cccc cccc}
\toprule
\multirow{2}{*}{\textbf{Method}} & \multirow{2}{*}{\makecell{\textbf{Epochs}}} & \multirow{2}{*}{\textbf{\#Params}} & \multicolumn{4}{c}{\textbf{Generation@256 w/o guidance}} & \multicolumn{4}{c}{\textbf{Generation@256 w/ guidance}} \\
\cmidrule(lr){4-7} \cmidrule(lr){8-11}
 & & & \textbf{gFID}$\downarrow$ & \textbf{IS}$\uparrow$ & \textbf{Prec.}$\uparrow$ & \textbf{Rec.}$\uparrow$ & \textbf{gFID}$\downarrow$ & \textbf{IS}$\uparrow$ & \textbf{Prec.}$\uparrow$ & \textbf{Rec.}$\uparrow$ \\
\midrule
\multicolumn{11}{l}{\textit{\textbf{Autoregressive}}} \\
\arrayrulecolor{black!30}\midrule
VAR~\citep{VAR} &  350  &  2.0B  & 1.92 & 323.1  & 0.82 & 0.59 & 1.73 & \textbf{350.2} & 0.82 & 0.60\\
MAR~\citep{mar} &  800  &  943M  & 2.35 &  227.8 & 0.79 & 0.62 & 1.55 & 303.7 & 0.81 & 0.62\\
xAR~\citep{xar} &  800  & 1.1B  & - & - & - & - & 1.24 & 301.6 & \textbf{0.83} & 0.64\\
\arrayrulecolor{black}\midrule
\multicolumn{11}{l}{\textit{\textbf{Pixel Diffusion}}} \\
\arrayrulecolor{black!30}\midrule
ADM~\citep{adm} &  400  &  554M  & 10.94 &  101.0 & 0.69 & 0.63 & 3.94 & 215.8 & \textbf{0.83} & 0.53\\
RIN~\citep{rin} &  480  &  410M  & 3.42  & 182.0  &  -   &  -   &  -   &   -    &  -   &  -  \\
PixelFlow~\citep{pixelflow} & 320 & 677M & - & -   &   -  &  -   & 1.98 & 282.1 & 0.81 & 0.60 \\
PixNerd~\citep{pixnerd} & 160 & 700M & -   &  -       &  - &  -  &  2.15 & 297.0 & 0.79 & 0.59 \\
SiD2~\citep{sid2} &  1280   &   -   & -   &  - &  -   &  -   &  1.38   &   -    &  -   &  - \\
\arrayrulecolor{black}\midrule
\multicolumn{11}{l}{\textit{\textbf{Latent Diffusion with VAE}}} \\
\arrayrulecolor{black!30}\midrule
DiT~\citep{dit} & 1400 & 675M         & 9.62 & 121.5 & 0.67 & 0.67 & 2.27 & 278.2 & \textbf{0.83} & 0.57 \\
MaskDiT~\citep{maskdit} & 1600 & 675M & 5.69 & 177.9 & 0.74 & 0.60 & 2.28 & 276.6 & 0.80 & 0.61 \\
SiT~\citep{sit} & 1400 & 675M         & 8.61 & 131.7 & 0.68 & 0.67 & 2.06 & 270.3 & 0.82 & 0.59 \\
MDTv2~\citep{MDTv2} & 1080 & 675M & - & - & - & - & 1.58 & 314.7 & 0.79 & 0.65 \\
\midrule
\multirow{2}{*}{VA-VAE~\citep{lgt}} & 80 & \multirow{2}{*}{675M} & 4.29 & - & - & - & - & - & - & -  \\
 & 800 &  & 2.17 & 205.6 & 0.77 & 0.65 & 1.35 & 295.3 & 0.79 & 0.65 \\
\midrule
\multirow{2}{*}{REPA~\citep{repa}} & 80 & \multirow{2}{*}{675M} & 7.90 & 122.6 & 0.70 & 0.65 & - & - & - & - \\
 & 800 &  & 5.78 & 158.3 & 0.70 & 0.68 & 1.29 & 306.3 & 0.79 & 0.64 \\
\midrule
\multirow{2}{*}{DDT~\citep{ddt}} & 80 & \multirow{2}{*}{675M} & 6.62 & 135.2 & 0.69 & 0.67 & 1.52 & 263.7 & 0.78 & 0.63 \\
& 400 &  & 6.27 & 154.7 & 0.68 & \textbf{0.69} & 1.26 & 310.6 & 0.79 & 0.65\\
\midrule
\multirow{2}{*}{REPA-E~\citep{repa-e}} & 80 & \multirow{2}{*}{675M} & 3.46 & 159.8 & 0.77 & 0.63 & 1.67 & 266.3 & 0.80 & 0.63 \\
 & 800 &  & 1.70 & 217.3 & 0.77 & 0.66 & 1.15 & 304.0 & 0.79 & 0.66 \\
\arrayrulecolor{black}\midrule
\multicolumn{11}{l}{\textit{\textbf{Latent Diffusion with RAE (Ours)}}} \\
\arrayrulecolor{black!30}\midrule
\DiTXL (\DinoS) & 800 & 676M & 1.87 & 209.7 & 0.80 & 0.63 & 1.41 & 309.4 & 0.80 & 0.63\\
\midrule
\multirow{3}{*}{\DDTXL (\DinoB)} & 20 & \multirow{3}{*}{839M} & 3.71 & 198.7 & \textbf{0.86} & 0.50 & -- & -- & -- & -- \\
 & 80 & & 2.16 & 214.8 & 0.82 & 0.59 & -- & -- & -- & -- \\
 & 800 & & \textbf{1.51} & \textbf{242.9} & 0.79 & 0.63 & \textbf{1.13} & 262.6 & 0.78 & \textbf{0.67} \\
\arrayrulecolor{black}\bottomrule
\end{tabular}
}
}
\caption{\textbf{Class-conditional performance on ImageNet 256$\times$256.} \RAE{} reaches an FID of 1.51 without guidance, outperforming all prior methods by a large margin. It also achieves an FID of 1.13 with AutoGuidance~\citep{AG}. We identified an inconsistency in the FID evaluation protocol in prior literature and re-ran the sampling process for several baselines. This resulted in higher baseline numbers than those originally reported. Further details are discussed in~\Cref{sec:FID_eval}.
}
\label{tab:comparison_perf}
\vspace{-0.5cm}
\end{table*}

\begin{figure*}[t]
    \centering
    \begin{minipage}{0.24\linewidth}
        \centering
        \includegraphics[width=\linewidth]{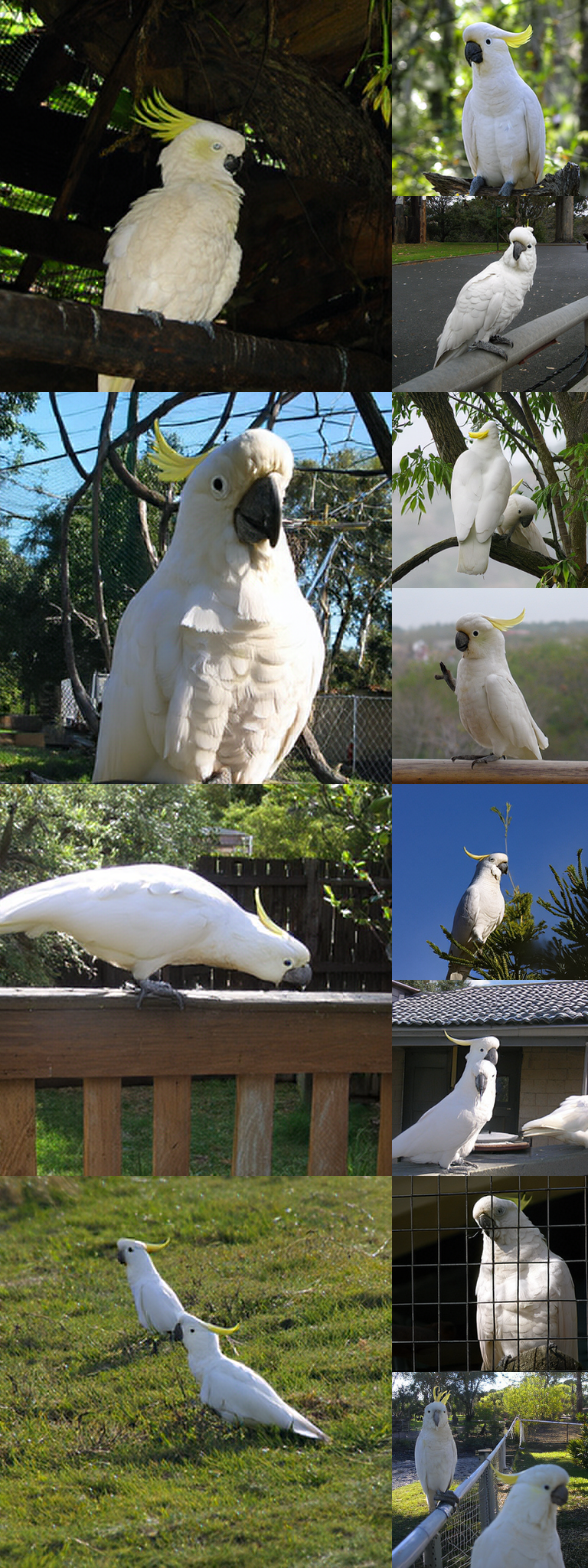}
    \end{minipage}
    \hfill
    \begin{minipage}{0.24\linewidth}
        \centering
        \includegraphics[width=\linewidth]{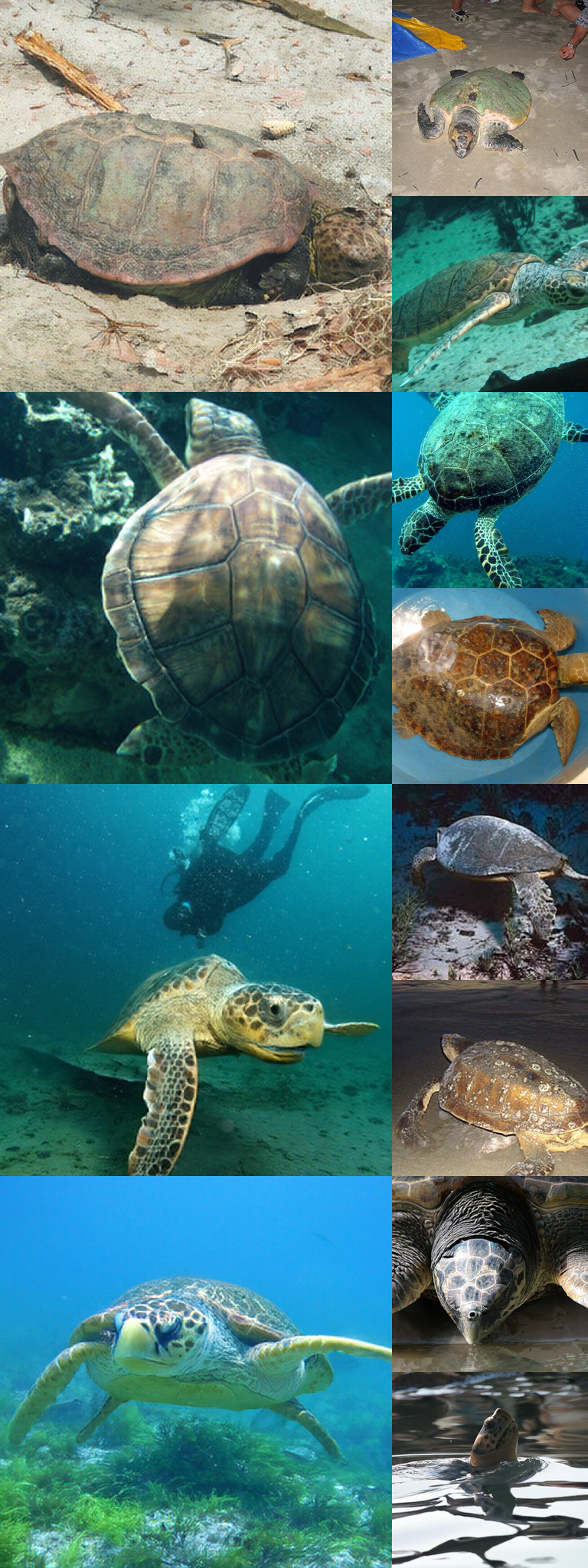}
    \end{minipage}
    \hfill
    \begin{minipage}{0.24\linewidth}
        \centering
        \includegraphics[width=\linewidth]{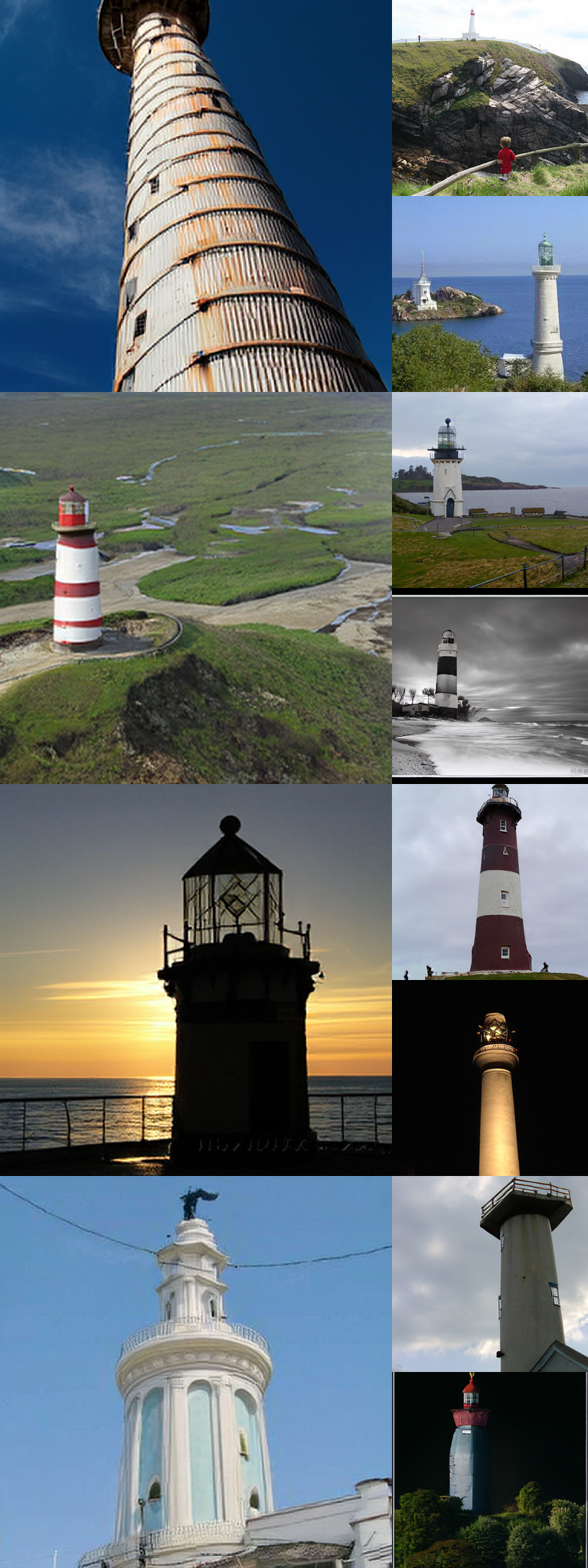}
    \end{minipage}
    \hfill\begin{minipage}{0.24\linewidth}
        \centering
        \includegraphics[width=\linewidth]{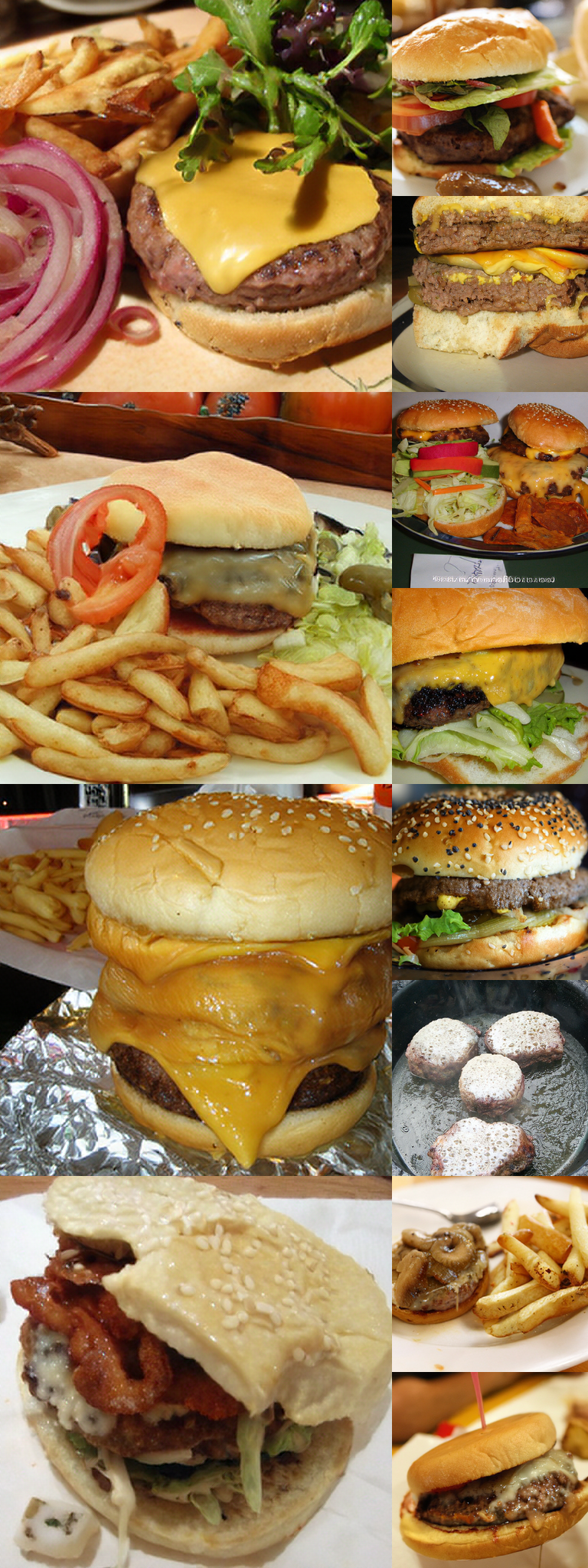}
    \end{minipage}
    \caption{\textbf{Qualitative samples} from our model trained at $512\times512$ resolution with AutoGuidance. The RAE-based DiT demonstrates strong diversity, fine-grained detail, and high visual quality.}
    \label{fig:sota-sample}
    \vspace{-.5em}
\end{figure*}

\noindent \textbf{Scaling.}
We compare \DDT with recent methods of different model at different scales. As shown in~\Cref{fig:scaling_with_baselines}, increasing the size of \DDT consistently improves the FID scores. The smallest model, \DDTS, reaches a competitive FID of 6.07, already outperforming the much larger REPA-XL. When scaling from \DDTS to \DDT-B, the FID improves significantly from 6.07 to 3.38, surpassing all prior works of similar or even larger scale. The performance continues to improve with \DDT-XL, setting a new state-of-the-art result of 2.16 at 80 training epochs.

\noindent \textbf{Performance.} Finally, we provide a quantitative comparison between \DDTXL, our most performant model, with recent state-of-the-art diffusion models on ImageNet $256\times 256$ and $512\times 512$ in~\Cref{tab:comparison_perf} and~\Cref{tab:imagenet512_sota}. Our method outperforms all prior diffusion models by a large margin, setting new state-of-the-art FID scores of \textbf{1.51} without guidance and \textbf{1.13} with guidance at $256\times 256$. On $512 \times 512$, with 400-epoch training, \DDTXL further achieves an FID of \textbf{1.13} with guidance, surpassing the previous best performance achieved by EDM-2 (1.25). 

Following the quantitative results, we present qualitative samples from our best models in~\Cref{fig:sota-sample}. These visualizations exhibit both high semantic diversity and fine-grained details comparable to ground-truth ImageNet samples, consistent with the achieved state-of-the-art FID. We provide additional visualization samples in~\Cref{apd:visual_result_512} and unconditional generation results in~\Cref{apd:unconditional_generation}. 

\noindent\textbf{Remarks on FID evaluation.} To construct the 50,000 samples used for conditional FID evaluation, we note that previous works such as DDT~\citep{ddt}, VAR~\citep{VAR}, MAR~\citep{mar}, and xAR~\citep{xar} typically evaluate using exactly 50 images per class, while others employ uniform random sampling across the 1,000 class labels. Although uniform random sampling asymptotically approaches the balanced sampling case, we surprisingly observed that class-balanced sampling consistently achieves around 0.1 lower FID scores. To ensure fair comparison, we re-evaluate several recent methods with accessible checkpoints, such as SiT~\citep{dit}, REPA~\citep{repa}, REPA-E~\citep{repa-e} using class-balanced sampling and update their reported scores accordingly. A more detailed comparison is provided in~\Cref{apd:FID_eval}.
\label{sec:FID_eval}

\section{Discussions}

\subsection{How can \RAE{} extend to high-resolution synthesis efficiently?}
\label{sec:high_res_synthesis}

A central challenge in generating high-resolution images is that resolution scales with the number of tokens: doubling image size in each dimension requires roughly four times as many tokens. To address this, we let the decoder handle resolution scaling by allowing its patch size patch size $p_d$ to differ from the encoder patch size $p_e$. When $p_d = p_e$, the output matches the input resolution; setting $p_d = 2p_e$ produces a $2\times$ upsampled image, reconstructing a $512\times512$ image from the same tokens used at $256\times256$.

\begin{wraptable}{r}{0.45\linewidth}
    \vspace{-0.5cm}
        \begin{center}
        \scalebox{0.8}{
        \begin{tabular}{lccc}
            \toprule
                Method & \#Tokens & gFID $\downarrow$& rFID$\downarrow$ \\
                \midrule
                Direct  & 1024 & 1.13 &0.53  \\
                Upsample  &256 & 1.61 & 0.97 \\
                \bottomrule
        \end{tabular}
        }
        \caption{\small \textbf{Comparison on ImageNet 512 $\times$ 512.} Decoder upsampling achieves competitive FID compared to direct 512-resolution training. Both models are trained for 400 epochs. }
    \label{tab:decoder_upsammling}
        \end{center}
    \vspace{-0.9cm}
\end{wraptable}

Since the decoder is decoupled from both the encoder and the diffusion process, we can reuse diffusion models trained at $256\times256$ resolution, simply swapping in an upsampling decoder to produce $512\times512$ outputs without retraining. As shown in~\Cref{tab:decoder_upsammling}, this approach slightly increases rFID but achieves competitive gFID, while being 4$\times$ more efficient than quadrupling the number of tokens.

\subsection{Does \DDT work without \RAE?}
In this work, we propose and study \RAE{} and \DDT. In~\Cref{sec:theory}, we showed that \RAE{} with \DiT already brings substantial benefits, even in the absence of \DDT{}. Here, we turn the question around: can \DDT{} still provide improvements, without the latent space of \RAE?

\begin{wraptable}{l}{0.4\linewidth}
    \vspace{-0.5cm}
        \begin{center}
        \scalebox{0.8}{
        \begin{tabular}{lcc}
            \toprule
                  & VAE & \DinoB \\
            \midrule
            \DiTXL  & 7.13 & 4.28 \\
            \DDTXL  & 11.70 & 2.16 \\
            \bottomrule
        \end{tabular}
        }
        \caption{\small \textbf{Performance on VAE.} \DDT{} yields worse FID than \DiT{}, despite using extra compute for the wide DDT head.}
        \label{tab:vae_dit_vs_ddt}
        \end{center}
    \vspace{-0.2cm}
\end{wraptable}  

To investigate, we train both \DiTXL{} and \DDTXL{} on SD-VAE latents with a patch size of $2$, alongside \DinoB{} for comparison, for 80 epochs, and report unguided FID. As shown in~\Cref{tab:vae_dit_vs_ddt}, \DDTXL{} performs even worse than \DiTXL{} on SD-VAE, despite the additional computation introduced by the diffusion head. This indicates that the DDT head provides little benefit in low-dimensional latent spaces, and its primary strength arises in high-dimensional diffusion tasks introduced by \RAE{}.

\subsection{The role of structured representation in high-dimensional diffusion?}

\DDT{} achieves strong performance when paired with the high-dimensional latent space of \RAE{}. This raises a key question: is the structured representation of \RAE{} essential, or would \DDT{} work equally well on unstructured high-dimensional inputs such as \emph{raw pixels}?

\begin{wraptable}{r}{0.4\linewidth}
    \vspace{-0.5cm}
        \begin{center}
        \scalebox{0.8}{
        \begin{tabular}{lcc}
            \toprule
                  & Pixel & \DinoB\\
                \midrule
                \DiTXL    &51.09 & 4.28 \\
                \DDTXL   &30.56 & 2.16 \\
                \bottomrule
        \end{tabular}
        }
            \caption{\small \textbf{Comparison on pixel diffusion.} Pixel Diffusion has much worse FID than diffusion on \DinoB. }
                    \label{tab:pixel_dit_vs_ddt}
        \end{center}
    \vspace{-0.5cm}
\end{wraptable}

To evaluate this, we train \DiTXL{} and \DDTXL{} directly on raw pixels. For $256\times256$ images with a patch size of 16, the resulting DiT input token dimensionality is $16\times16\times3=768$, matching that of the \DinoB{} latents. We report unguided FID after 80 epochs. As shown in~\Cref{tab:pixel_dit_vs_ddt}, \DDT{} outperforms \DiT{} on pixels, but both models perform far worse than their counterparts trained on \RAE{} latents. These results demonstrate that high dimensionality alone is not sufficient: the structured representation provided by \RAE{} is crucial for achieving strong performance gains.

\vspace{-.5em}
\section{Conclusion}

In this work, we challenge the belief that pretrained representation encoders are too high-dimensional and too semantic for reconstruction or generation. We show that a frozen representation encoder, paired with a lightweight trained decoder, forms an effective Representation Autoencoder (RAE). On this latent space, we train Diffusion Transformers in a stable and efficient way with three added components: (1) match DiT width to the encoder token dimensionality; (2) apply a dimension-dependent shift to the noise schedule; and (3) add decoder noise augmentation so the decoder handles diffusion outputs. We also introduce DiT$^{\mathrm{DH}}$, a shallow-but-wide diffusion transformer head that increases width without quadratic compute. Empirically, RAEs enable strong visual generation: on ImageNet, our RAE-based \DDTXL{} achieves 1.51 FID at $256\times256$ (without guidance) and 1.13 at both $256\times256$ and $512\times512$ (with guidance). We believe RAE latents serve as a strong candidate for training diffusion transformers in the future.

\section{Acknowlegments}
The authors would like to thank Sihyun Yu, Jaskirat Singh, Shusheng Yang, Xichen Pan, Chenyu Li, Bingda Tang, Fred Lu, David Fan, Amir Bar, Shangyuan Tong for insightful discussions and feedback on the manuscript. This work was mainly supported by the Google TPU Research Cloud (TRC)
program and the Open
Path AI Foundation. ST is supported by Meta AI Mentorship Program. SX also acknowledges support from the MSIT IITP grant (RS-2024-00457882) and the NSF award
IIS-2443404.

\clearpage
\bibliography{iclr2026_conference}
\bibliographystyle{iclr2026_conference}
\newpage
\appendix
\section{Extended Related Work}
\label{apd:extend_related_work}
\paragraph{Representation encoder as autoencoder.}
Recent studies have investigated leveraging semantic representations for reconstruction, particularly in MLLMs where diffusion decoders are conditioned on semantic tokens~\citep{emu2, blip3o, metaquery, metamorph}. While this improves visual quality, the reliance on large pretrained diffusion decoders makes reconstructions less faithful to the input, limiting their effectiveness as true autoencoders. Very recently, UniLIP~\citep{unilip} employs a one-step convolutional decoder on top of InternViT~\citep{InternViT}, achieving reconstruction quality surpassing SD-VAE. However, UniLIP relies on additional large-scale fine-tuning of pretrained ViTs, arguing that frozen pretrained \SEM{} lacks sufficient visual detail. In contrast, we show this is not the case: frozen \SEM{} achieves comparable reconstruction performance while enabling much faster convergence in diffusion training.

Another line of related work also try to utilize \SEM{} directly as tokenizers. VFMTok~\citep{VFMTok} and DiGIT~\citep{quantize-dino} applies vector-quantization directly to pretrained \SEM{} like Dino or SigLIP. These approaches transform \SEM{} into an effective tokenizer for AR models, but still suffer from the information capacity bottleneck brought by quantization.

\paragraph{Compressed image tokenizers.} 
Autoencoders have long been used to compress images into low-dimensional representations for reconstruction~\citep{autoencoder,DAE}. VAEs~\citep{VAE} extend this paradigm by mapping inputs to Gaussian distributions, while VQ-VAEs~\citep{vqvae,vqvae2} introduce discrete latent codes. VQGAN~\citep{VQGAN} adds adversarial objectives, and ViT-VQGAN~\citep{VQGAN,Efficient-VQGAN} modernizes the architecture with Vision Transformers (ViTs)~\citep{vit}. Other advances include multi-stage quantization~\citep{rqvae,movq}, lookup-free schemes~\citep{fsq,bsq}, token-efficient designs such as TiTok and DCAE~\citep{titok,dcae}, and structure-preserving approaches like EQ-VAE~\citep{eqvae}.~\citep{scalingvae} further explores the scaling behavior of VAEs. LARP, CRT, REPA-E~\citep{LARP,CRT, repa} tried to improve VAE with generative priors via back-propagation.
In contrast, we dispense with aggressive compression and instead adopt a pretrained \SEM{} as encoder. This avoids the encoder collapsing into shallow features optimized only for reconstruction loss, while providing strong pretrained representations that serve as a robust latent space.

\paragraph{Generative models.} 
Modern image generation is dominated by two paradigms: autoregressive (AR) models and diffusion models. AR models~\citep{de2019hierarchical,ar1,ar2,image-transformer,gpt-pixel} generate images sequentially, token by token, and benefit from powerful language-model architectures but often suffer from slow sampling. Diffusion models~\citep{ddpm,iddpm,LDM,sit,dit} instead learn to iteratively denoise noisy signals, offering superior sample quality and scalability, though at the cost of many sampling steps. In this work, we build on diffusion models but adapt them to high-dimensional latent spaces provided by pretrained \SEM{}. We find \SEM{} provide faster convergence and improved scaling behavior.

\paragraph{Robust decoders for generation.}
Recent works suggest that incorporating masking or latent denoising losses can improve tokenizer training. \textit{l}-DeTok~\citep{ldetok} shows that combining both losses yields strong VAEs for second-stage MAR~\citep{mar,fluid} generation, while RobusTok~\citep{robustok} demonstrates that training with perturbed tokens makes decoders more robust. Notably, \citet{ldetok} report that denoising loss only increase performance when encoder and decoder are jointly trained, whereas we observe substantial gains in generation quality with frozen encoders.

\paragraph{Visual representation learning.} 
Visual representations primarily fall into two broad families: self-supervised encoders that learn invariances from augmented views or masked prediction~\citep{he2019momentum,chen2020simple,grill2020bootstrap,zhou2021ibot,MAE}, and language-supervised encoders trained from image–text pairs~\citep{radford2021learning,zhai2023sigmoid, chen2024internvl, siglip2}. Recent work~\citep{fan2025scaling, siglip2, wang2025scaling} scales both lines on billion-scale corpora, yielding robust, semantically rich features. In this work, we leverage these pretrained encoders directly: we freeze the encoder, and train a lightweight decoder to form a Representation Autoencoder (RAE). We show that such RAEs are effective for both reconstruction and generation. 
\section{Proofs}

\subsection{Proof of Lower Bound for Training Loss}
\label{app:proof-train}
\TrainBound*

\begin{proof}
    By~\citet{stochastic}, the distribution $\rho_t$ of $\rvx_t$ satisfies $\rho_0 = p(\rvx)$, $\rho_1 = \gN(0, \mathbf{I}_n)$, and $\partial_t \rho + \nabla \cdot (v \rho) = 0$
    where $v$ is the optimal velocity predictor defined as $v(\rvx_t, t) = \E[\bm{\varepsilon} - \rvx | \rvx_t]$. Also, by Theorem 2.7 in~\citet{stochastic}, there exists $f^* \in C^0((C^1(\sR^n))^n; [0, 1])$\footnote{family of functions $f: [0,1]\times \sR^n \to \sR^n$ that is continuous in $t$ for all $(\rvx, t) \in [0, 1]\times \sR^n$, and $f(\cdot, t)$ is a continuously differentiable function from $\sR^n$ to $\sR^n$.} that uniquely minimizes the $\gL(f, \theta)$ and perfectly approximates $v$.

    By our training setting, it's reasonable to assume that $\rvx \sim p(\rvx)$ and $\bm{\varepsilon} \sim \gN(0, \mathbf{I}_n)$ are independent. Then the distribution of the objective $\rvy = \bm{\varepsilon} - \rvx \sim p_{\rvy}(\rvy)$ satisfies $p_{\rvy} (\rvy) = \int_{\sR^n} \gN(0, \rmI_n)(\rvy + \rvx) p(\rvx) \mathrm{d}\rvx$.
    Clearly, $p_{\rvy}$ has full support on $\sR^n$ and is strictly positive, indicating $\rvy$ has a non-zero probability anywhere in $\sR^n$. Similarly, for $\rvx_t = (1 - t) \rvx + t\bm{\varepsilon}$, given any $t$, $p_{\rvx_t}(\rvw) = \int_{\sR^n} \gN(0, t^2\rmI_n)(\rvw - \rvx) \frac{1}{(1-t)} p(\frac{\rvx}{1-t})\mathrm{d}\rvx$
    also has full support on $\sR^n$ and is strictly positive, indicating $\rvx_t$ has a non-zero probability anywhere in $\sR^n$ as well.

    Recall that for any function $f: \gX \to \gY$, $\text{Im}(f) = \{f(x): x\in \gX\}$. Then for linear transformation $f(\rvx) = \mM \rvx$ with $\mM \in \sR^{d \times n}$, $\text{Im}(f) = \{\mM \rvx: \rvx\in \sR^n\}$; we denote this as $\text{Im}(\mM)$. Now, for any $g \in \gG_d$, $\text{Im}(g) = \{\mB f(\mA\rvx): \rvx \in \sR^n\} \subseteq \{\mB \rvy: \rvy \in \sR^d\} = \text{Im}(\mB)$. Since $\text{rank}(\mB) \leq d$, $\dim \text{Im}(g) \leq \text{rank}(\mB) \leq d < n$, therefore $\text{Im}(g) \subseteq \text{Im}(\mB) \subset \sR^n$.

    Now, given $g \in \gG_d$ and the deterministic pair $(\rvx_t, \rvy, t) \in (\sR^n, \sR^n, [0, 1])$, by Projection Theorem, \begin{align}
        \Vert g(\rvx_t, t) - \rvy \Vert^2 \geq \Vert \rvu_g - \rvy \Vert^2
    \end{align} 
    where $\rvu_g \in \text{Im}(g)$ is the unique minimizer and $\rvu_g - \rvy$ is orthogonal to $\text{Im}(g)$.
    Since $\Vert \cdot \Vert^2 \geq 0$, we can take expectation on both sides \begin{align}
        \inf_{g \in \gG_d} \E_{\rvx \sim p(\rvx), \bm{\varepsilon} \sim \gN(0, \mathbf{I}_n)} \big[\Vert g(\rvx_t, t) - \rvy \Vert^2\big] &\geq \inf_{g \in \gG_d} \E \big[\Vert \rvu_g - \rvy \Vert^2\big] \notag\\
        &\geq \inf_{\rvu \in S; \dim S \leq d} \E \big[\Vert \rvu - \rvy \Vert^2\big] \notag\\
        &\geq \inf_{S; \dim S \leq d}\E \big[ \Vert \rvy \Vert^2 - \Vert \mP_{S}\rvy\Vert^2 \big]\label{eq:inf-lwbd}
    \end{align}
    where $\mP_S$ denote the projection matrix from $\sR^n$ onto $S$. Without loss of generality, we assume $\E[ \rvx] = 0$\footnote{Non-centered $\rvx$ with $\E[\rvx] = \mu$ will additionally introduce $\Vert \mu \Vert^2 - \Vert \mP_S \mu \Vert^2$ to the lower bound; we ignore this term since most data processing pipelines will center the data.}, then Eq~\ref{eq:inf-lwbd} can be expanded as \begin{align}
        \inf_{g \in \gG_d} \E_{\rvx \sim p(\rvx), \bm{\varepsilon} \sim \gN(0, \mathbf{I}_n)} \big[\Vert g(\rvx_t, t) - \rvy \Vert^2\big] & \geq \sum_{i=1}^n \E[\rvy_i^2] - \sup_{S; \dim S \leq d} \sum_{i=1}^n\E[(\mP_S \rvy)_i^2] \notag\\
        &= \Tr(\Cov(\rvy)) - \sup_{S; \dim S \leq d}\Tr(\Cov(\mP_S \rvy)) \notag\\
        &\geq \sum_{i=1}^n \lambda_i - \sum_{i=1}^d \lambda_i = \sum_{i=d+1}^n \lambda_i \label{eq:ky-fan}
    \end{align}
    where Eq~\ref{eq:ky-fan} is obtained via Ky-Fan Maximum Principle.

    When $d \geq n$, $\sup_{S} \E[\Vert \mP_S \rvy \Vert^2] = \E[\Vert\rvy \Vert^2]$, leading to a trivial lower bound in Eq~\ref{eq:ky-fan}, and $\gG_d = C^0((C^1(\sR^n))^n; [0, 1])$.
\end{proof}

\subsection{Proof of Lower Bound for Inference Loss}

\begin{theorem}
    Consider the same setup as Theorem~\ref{thm:lower-bound}. Let $\rvx_{1}$ be the initial random variables in the sampling process, and \begin{align*}
        \rvx_0 &= \text{ODE}(g, \rvx_1, 1 \to 0) \\
        \rvx_0^* &= \text{ODE}(f^*, \rvx_1, 1 \to 0)
    \end{align*}
    where $\text{ODE}(f, \rvx, t \to s)$ refers to any ODE solver that integrates $f$ from time $t$ to $s$ using $\rvx$ as the initial condition. We further assume for any $(\rvx, \rvy, t) \in (\sR^n, \sR^n, [0, 1])$, there exists constant $L > 0$ such that \begin{align*}
        \Vert f^*(\rvx, t) - f^*(\rvy, t)\Vert &\leq L \Vert \rvx - \rvy \Vert
    \end{align*}
    then \begin{align}
        \Vert \rvx_0^* - \rvx_0 \Vert \geq \frac{1 - e^{-L}}{L} \sum_{i=d+1}^n \lambda_i
    \end{align}
\end{theorem}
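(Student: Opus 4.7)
The plan is to propagate the training-loss lower bound of Theorem~\ref{thm:lower-bound} through the sampling ODE via a Grönwall/comparison argument, with the Lipschitz constant $L$ controlling the worst-case cancellation of the pointwise prediction error.

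I begin by setting $e(t) := X_t^* - X_t$ with $e(1) = 0$, and write $\dot e(t) = D(t) + \Delta(t)$, where $\Delta(t) := f^*(X_t, t) - g(X_t, t)$ is the pointwise prediction error along the approximate trajectory and $D(t) := f^*(X_t^*, t) - f^*(X_t, t)$ is a Lipschitz remainder with $\Vert D(t)\Vert \le L\,\Vert e(t)\Vert$. Integrating with $e(1)=0$ gives $\rvx_0^* - \rvx_0 = e(0) = -\int_0^1 \bigl[D(t) + \Delta(t)\bigr]\, dt$. The key observation is a worst-case analysis of the Lipschitz remainder: among all admissible $D$ with $\Vert D(t)\Vert \le L\Vert e(t)\Vert$, the choice that minimizes $\Vert e(0)\Vert$ for a fixed $\Delta$ is the maximally expansive one $D(t) = L\,e(t)$. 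Under this choice the effective ODE reads $\dot e = L e + \Delta$, whose unique backward solution from $e(1) = 0$ is
\begin{align*}
e(0) \;=\; -\int_0^1 e^{-L(1-s)}\, \Delta(s)\, ds,
\end{align*}
so whenever $\Delta$ is roughly constant along the trajectory, $\Vert e(0)\Vert \ge \frac{1-e^{-L}}{L}\,\Vert \Delta\Vert$, which is the origin of the scalar prefactor. A rigorous justification can be obtained by a standard comparison/optimal-control argument applied to $\Vert e(t)\Vert^2$ and then lifted to the vector-valued setting.

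Finally, I would invoke Theorem~\ref{thm:lower-bound} pointwise in $t$: because $g(\cdot, t)$ factors through the $d$-dimensional subspace $\operatorname{Im}(\mB)$ for every $t$, and the marginal of $\rvx_t$ has full support on $\sR^n$, the same Ky-Fan/projection step shows $\E_{\rvx_t}\Vert \Delta(t)\Vert^2 \ge \sum_{i=d+1}^n \lambda_i$. Combining with the error-ODE analysis, and reading the theorem's left-hand side in an expected-squared-norm sense, produces the stated inequality.

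The main obstacle is the extremal analysis of the Lipschitz remainder. The comparison that singles out $D = L e$ as the worst case is delicate because $D$ is not a free control---it is determined by the specific $f^*$---so one must argue that \emph{any} $L$-Lipschitz $f^*$ induces a $D$ whose net attenuation of $\Delta$ is at most $\frac{1-e^{-L}}{L}$, via a sharp backward-time Grönwall inequality with the correct sign. A second subtle point is matching the $L^2$ character of the training-loss bound $\sum_{i=d+1}^n \lambda_i$ with the $L^1$-in-$t$ shape produced by the error-ODE analysis; this is most cleanly handled by passing to expected squared norms on the left-hand side and invoking Jensen/Cauchy--Schwarz.
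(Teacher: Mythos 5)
Your overall strategy mirrors the paper's: decompose the trajectory discrepancy into the Lipschitz remainder of $f^*$ plus the approximation error $\Delta(t) = f^*(\rvx_t,t) - g(\rvx_t,t)$, run a Gr\"onwall-type argument from the shared initial condition to extract the factor $\tfrac{1-e^{-L}}{L}$, and invoke Theorem~\ref{thm:lower-bound} to lower-bound $\Delta$. But the step you yourself label ``the main obstacle'' is exactly the step you never supply, so as written this is not a proof. The claim that the worst admissible remainder is $D(t) = L\,e(t)$ is asserted, not established: $D$ is slaved to $f^*$ and to the two trajectories rather than being a free control, and for a fixed constraint $\Vert D\Vert \le L\Vert e\Vert$ the adversarial behaviour depends on the geometry of $\Delta$ along the path --- a rotating $\Delta$ produces cancellations in your variation-of-constants formula $e(0) = -\int_0^1 e^{-L(1-s)}\Delta(s)\,\mathrm{d}s$ that the ``roughly constant $\Delta$'' caveat does not cover. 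So the passage from the error ODE to $\Vert e(0)\Vert \ge \tfrac{1-e^{-L}}{L}\Vert\Delta\Vert$ is missing its justification, and that is the heart of the theorem.

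The paper closes this step without any extremal-control reasoning: it tracks the scalar quantity $\Vert \rvx^*_{\overleftarrow{t}} - \rvx_{\overleftarrow{t}}\Vert$ in reversed time, writes its derivative as the norm of the velocity difference, lower-bounds that by $\Vert\Delta\Vert - L\Vert \rvx^*_{\overleftarrow{t}} - \rvx_{\overleftarrow{t}}\Vert$ via the reverse triangle inequality plus the Lipschitz hypothesis, and then applies Gr\"onwall's lemma with integrating factor $e^{Lt}$; integrating from the common initial condition yields the $\tfrac{1-e^{-L}}{L}$ prefactor directly. If you want to salvage your route, what you must prove is precisely that scalar differential inequality (a sharp backward Gr\"onwall bound), not a characterization of a worst-case $D$. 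Your secondary worry --- that Theorem~\ref{thm:lower-bound} controls a time-integrated expected squared error while the ODE argument consumes a pointwise $\Vert\Delta\Vert$ --- is a legitimate observation, but note the paper does not resolve it with Jensen or Cauchy--Schwarz either: it simply substitutes $\Vert\Delta\Vert \ge \sum_{i=d+1}^n \lambda_i$ from Theorem~\ref{thm:lower-bound}, so recasting the left-hand side in an expected-squared-norm sense is a departure from, not a reconstruction of, the paper's argument.
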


\begin{proof}
    We first define a forward $ODE$ that integrates from $0 \to 1$ $\rvx_{\overleftarrow{t}} := \rvx_{-t}$, and \begin{align*}
        \mathrm{d} \rvx_{\overleftarrow{t}} &= g(\rvx_{\overleftarrow{t}}, t) \mathrm{d}t \\
        \mathrm{d} \rvx_{\overleftarrow{t}}^* &= f^*(\rvx_{\overleftarrow{t}}^*, t) \mathrm{d}t 
    \end{align*}
    Then 
    \begin{align}
        \frac{\mathrm{d}}{\mathrm{d} t} \Vert \rvx_{\overleftarrow{t}}^* - \rvx_{\overleftarrow{t}} \Vert &= \Vert f^*(\rvx_{\overleftarrow{t}}^*, t) - g(\rvx_{\overleftarrow{t}}, t) \Vert \notag \\
        &\geq \Vert  f^*(\rvx_{\overleftarrow{t}}, t) - g(\rvx_{\overleftarrow{t}}, t) \Vert - \Vert f^*(\rvx_{\overleftarrow{t}}^*, t) - f^*(\rvx_{\overleftarrow{t}}, t) \Vert \notag \\
        &\geq \Vert \Delta \Vert - L \Vert \rvx_{\overleftarrow{t}}^* - \rvx_{\overleftarrow{t}} \Vert
    \end{align}
    where $\Delta$ denotes the approximation error to $f^*$ for $g \in \gG_d$. Applying Gronwall's Lemma, we have \begin{align}
        e^{L\overleftarrow{t}}\Vert \rvx_{\overleftarrow{t}}^* - \rvx_{\overleftarrow{t}} \Vert \bigg|^1_0 &\geq \int_0^t e^{Ls} \Vert \Delta \Vert \mathrm{d}s \notag \\
        \implies \Vert \rvx_0^* - \rvx_0 \Vert = \Vert \rvx_{\overleftarrow{1}}^* - \rvx_{\overleftarrow{1}} \Vert &\geq \frac{(1 - e^{-L})}{L} \sum_{i=d+1}^n \lambda_i
    \end{align}
    where by Theorem~\ref{thm:lower-bound}, $\Vert \Delta \Vert \geq \sum_{i=d+1}^n \lambda_i$.
\end{proof}

\section{\RAE{} Implementation}
\label{apd:rae_implementation}
\subsection{Encoder Normalization}
\label{apd:encoder_selection}
For any given frozen \SEM{}, we discard any [CLS] or [REG] token produced by the encoder, and keep the all of patch tokens. We then apply a layer normalization to each token independently, to ensure each token has zero mean and unit variance across channels. We note that all \SEM{} we use adopt the standard ViT architecture~\citep{vit}, which have already applied layer normalization after the last transformer block. Therefore, we only need to cancel the affine parameters of the layer normalization in \SEM{}. This does not affect the representation quality of \SEM{}, as it is a linear transformation. 

\textbf{Practical Notes.} Specifically, we use DINOv2 with Registers~\citep{Dinov2wReg}. Since \Dinovt only provides variants with $p_e= 14$, we interpolate the input images to $224\times224$ but set $p_d=16$, ensuring the model still produces $256$ tokens while reconstructing $256\times256$ images.

\subsection{Decoder Training Details}
\label{apd:decoder_training_details}

\textbf{Datasets.} We primarily use ImageNet-1K for training all decoders. Most experiments are conducted at a resolution of $256\times 256$. For $512$-resolution synthesis without decoder upsampling, we train decoders directly on $512\times 512$ images.

\textbf{Decoder Architecture.}
The decoder takes the token embeddings produced by the frozen encoder takes the token embedding reconstructs them back into the pixel space using the same patch size as the encoder. As a result, it can generate images with the same spatial spatial resolution as the encoder's inputs input. Following~\citet{MAE}, we prepend a learnable [CLS] token decoder's input sequence and discard it after decoding.

\textbf{Discriminator Architecture.}
We include the majority of our decoder training details in Table~\ref{tab:recon_training_config}. We follow most of the design choices in StyleGAN-T~\citep{StyleGAN-T}, except for using a frozen Dino-S/8~\citep{DINO} instead of Dino-S/16 as the discriminator. We found using Dino-S/8 stabilizes training and avoid the decoder to generate adversarial patches. We also remove the virtual batch norm in~\cite{StyleGAN-T} and use the standard batch norm instead. All input is interpolated to $224 \times 224$ resolution before feeding into the discriminator.

\begin{table}[h!]
\centering
\caption{Training configuration for decoder and discriminator.}
\label{tab:recon_training_config} 
\begin{tabular}{lll}
\toprule
\textbf{Component} & \textbf{Decoder} & \textbf{Discriminator} \\
\midrule
optimizer                  & Adam                & Adam \\
max learning rate         & $2 \times 10^{-4}$  & $2 \times 10^{-4}$ \\
min learning rate         & $2 \times 10^{-5}$  & $2 \times 10^{-5}$\\
learning rate schedule     & cosine decay        & cosine decay \\
optimizer betas            & (0.5, 0.9)          & (0.5, 0.9) \\
weight decay               & 0.0                 & 0.0 \\
batch size                 & 512                 & 512 \\
warmup                     & 1 epoch            & 1 epoch \\
loss                       & $\ell_1$ + LPIPS + GAN & adv. \\
Model                       & ViT-(B, L, XL)& Dino-S/8 (frozen) \\
LPIPS start epoch          & 0 & -- \\
disc. start epoch    & -- &6 \\
adv. loss start epoch    & 8 & --\\
Training epochs           & 16 & 10 \\
\bottomrule
\end{tabular}
\end{table}

\textbf{Losses.}
For training the decoder, we use a mixture of L1, LPIPS~\citep{lpips} and adversarial loss~\citep{GAN}:

\begin{align*}
 z = E(x), \hat{x} &= D(z) \\
 \mathcal{L}_{rec}(x) = \omega_L \operatorname{LPIPS}(\hat{x}, x) + &\operatorname{L1}(\hat{x}, x) + \omega_G \lambda \operatorname{GAN}(\hat{x}, x),
\end{align*}

where $E,D$ are the encoder and decoder of \RAE{}.
We set $\omega_L=1.$ and $\omega_G=0.75$. We use the same losses as in StyleGAN-T~\citep{StyleGAN-T} for discriminator, and a GAN loss as in~\cite{VQGAN}. We also adopt the adaptive weight $ \lambda$ for GAN loss proposed in~\cite{VQGAN} to balance the scales of reconstruction and adversarial losses.  $\lambda$ is defined as:

\[
 \lambda = \frac{\|\nabla_{\hat{x}} \mathcal{L}_{rec}\|}{\|\nabla_{\hat{x}} \operatorname{GAN}(\hat{x}, x)\| + \epsilon},
\]

\textbf{Augmentations.} For data augmentation, we first resize the input image to $384 \times 384$ and then randomly crop to $256 \times 256$. We also apply differentiable augmentations with default hyperparameters in~\cite{DiffAug} before discriminator.

\subsection{Visualizations}
We present visualizations of reconstructions from different \RAE{}s. As shown in~\Cref{fig:recon_vis}, all \RAE{}s achieve satisfactory reconstruction fidelity.

\begin{figure*}[t]
\centering
\begin{minipage}{0.\linewidth}
    \centering
\end{minipage}
\hfill
\begin{minipage}{1.0\linewidth}
    \centering
    \includegraphics[width=\linewidth]{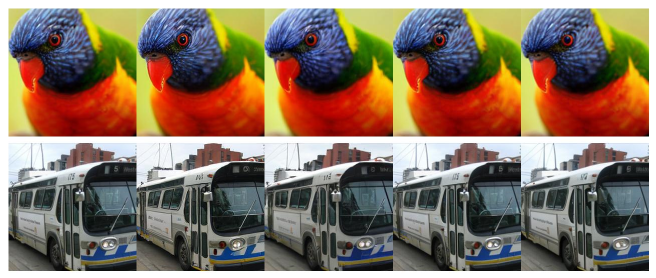}
    \caption{\textbf{Reconstruction examples.} From left to right: input image, \RAE{} (\DinoB), \RAE{} (\SiglipB), \RAE{} (\MAEB), SD-VAE. Zoom in for details.}
    \label{fig:recon_vis}
\end{minipage}
\end{figure*}

\section{Diffusion Model Implementation}
\label{apd:diffusion_details}

\textbf{Datasets.} We primarily use ImageNet-1K~\citep{imagenet} for diffusion training. Most experiments are conducted at a resolution of $256\times 256$. For $512$-resolution synthesis without decoder upsampling, we train diffusion models directly on $512\times 512$ images.

\begin{wraptable}[12]{r}{0.5\linewidth}
\centering
\small
\begin{tabular}{@{}lccc@{}}
\toprule
Model Config & Dim & Num-Heads & Depth \\
\midrule
S   & 384  & 6  & 12 \\
B   & 768  & 12 & 12 \\
L   & 1024 & 16 & 24 \\
XL  & 1152 & 16 & 28 \\
XXL & 1280 & 16 & 32 \\
H   & 1536 & 16 & 32 \\
G   & 2048 & 16 & 40 \\
T   & 2688 & 21 & 40 \\ 
\bottomrule
\end{tabular}
\caption{\small Model configurations for different sizes.}
\label{tab:model_configs}
\end{wraptable}
\textbf{Models.}
By default, we use LightningDiT~\citep{lgt} as the backbone of our diffusion model. 
We use a continuous time formulation of flow matching and restrict the timestep input to real values in $[0,1]$. 
Following prior work~\citep{score_sde}, we replace the timestep embedding with a Gaussian Fourier embedding layer. 
We also add Absolute Positional Embeddings (APE) to the input tokens in addition to RoPE, though we do not observe significant performance difference with or without APE.

For \DDT{}, we generally follow the same architecture as \DiT{}, and does not reapply APE for the DDT head input. We use a linear layer to map the \DDT{} encoder output to the \DDT{} decoder dimension when the dimension of \DiT and DDT head mismatches. We provide a detailed model configuration in~\Cref{tab:model_configs}.

\textbf{Compute.}
For all models based on \RAE{}, we use a patch size of $1$. For baseline experiments on VAE and pixel inputs, we use patch sizes of $2$ and $16$, respectively.
Across all $256\times256$ experiments, the diffusion model processes a token sequence of length $256$. Consequently, the computational cost of the \DiT{} backbone remains identical across \RAE{}, VAE, and pixel settings, with differences arising only from the patchification step.
Since patchification accounts for less than 1\% of the total GFLOPs, training \DiT{} on different autoencoders introduces only a negligible compute overhead.

\textbf{Optimization.}
For \LGT{}, we strictly follow the optimization strategy in LightningDiT~\citep{lgt}, using AdamW with a constant learning rate of $2.0\times 10^{-4}$, a batch size of 1024 and an EMA weight of 0.9999. We do not observe instability or abnormal training dynamics with this recipe on \LGT{}. For \DDT{}, we find using the recipe in~\citep{lgt} leads to loss spikes at later epochs and slow EMA model convergence at early epochs. We instead use a linear decay from $2.0\times 10^{-4}$ to $2.0\times 10^{-5}$ with a constant warmup of 40 epochs. To encourage the convergence of EMA model, we change the EMA weight from 0.9999 to 0.9995. Additionally, we use gradient clipping of $1.0$ for \DDT.  Other optimization hyperparameters are the same as \LGT{}. All models are trained for 80 epochs unless otherwise specified. We only report EMA model performance.

\textbf{Sampling.}
We use standard ODE sampling with Euler sampler and 50 steps by default. We find the performance generally converges above 50 steps. We use the same sampling hyperparameters for both \LGT{} and \DDT{}.

\textbf{Computation.}
We use PyTorch/XLA on TPU for all training and inference on \RAE{}. For evaluation, we use a single v6e-8 to generate 50K samples.
For the 800 epoch \DDTXL{} (\DinoB) result, we conduct the generation on a machine with 4 A100s and evaluate FID on CPU due to a lack of TensorFlow GPU support.
We use an internal JAX codebase for training baseline models on VAEs. 

\section{Sampling details for FID Evaluation}
\label{apd:FID_eval}
For the FID-50K evaluation on \RAE{}, we followed the protocol used in ~\citep{VAR,mar,xar,ddt}, sampling 50 images from each class for a total of 50,000 images. The reference statistics were taken from the ADM pre-computed statistics~\citep{adm} over the full ImageNet dataset. 
Another commonly adopted sampling strategy is to uniformly sample class labels 50K times and generate images accordingly~\citep{dit,sit,repa}. It is worth noting that this strategy is not equivalent to the per-class sampling scheme--although random sampling asymptotically converges to the per-class version, the resulting FID scores differ slightly in practice.

To ensure a fair comparison, we re-evaluate previous state-of-the-art methods that did not use class-balanced sampling with class-balanced sampling. We also evaluate our method with random sampling. 
As shown in~\Cref{tab:comparison_sampling}, all methods' FID improves consistently with class-balanced sampling. 

We also note that the original ImageNet training set is inherently unbalanced, with class sizes ranging from approximately 732 to 1,300 samples~\citep{imagenet}. Therefore, the common assumption behind both balanced and random sampling—namely, that the label distribution is uniform—is not entirely accurate. However, since 895 classes contain exactly 1,300 samples, the dataset exhibits a high degree of near-equivalence among most classes. This partial balance may partly explain why balanced sampling consistently yields better results, as it more closely approximates the true label distribution of the training set.
As the FID values approach the lower ranges, these subtle details begin to have a greater impact. We therefore want to raise awareness within the community. Finally, while the FID metric remains a useful indicator of generative quality, its absolute value becomes less meaningful as generation fidelity continues to improve.

\begin{table*}[t]
\vspace{-0.5cm}
\centering
\small
\renewcommand{\arraystretch}{1.1}
\setlength{\tabcolsep}{5pt}
\makebox[\textwidth][c]{%
\resizebox{1.00\textwidth}{!}{%
\begin{tabular}{l c cc cc cc cc}
\toprule
\multirow{3}{*}{\textbf{Method}} & \multirow{3}{*}{\textbf{Epochs}} &
\multicolumn{4}{c}{\textbf{Generation@256 w/o guidance}} &
\multicolumn{4}{c}{\textbf{Generation@256 w/ guidance}} \\
\cmidrule(lr){3-6} \cmidrule(lr){7-10}
 & & \multicolumn{2}{c}{\textbf{Random}} & \multicolumn{2}{c}{\textbf{Balanced}} &
 \multicolumn{2}{c}{\textbf{Random}} & \multicolumn{2}{c}{\textbf{Balanced}} \\
\cmidrule(lr){3-4} \cmidrule(lr){5-6} \cmidrule(lr){7-8} \cmidrule(lr){9-10}
 & & \textbf{gFID}$\downarrow$ & \textbf{IS}$\uparrow$ & \textbf{gFID} & \textbf{IS} &
 \textbf{gFID}$\downarrow$ & \textbf{IS}$\uparrow$ & \textbf{gFID} & \textbf{IS} \\
\midrule
\multicolumn{10}{l}{\textit{\textbf{Autoregressive}}} \\
\arrayrulecolor{black!30}\midrule
VAR~\citep{VAR} & 350 & - & - &\cellcolor{gray!20}1.92 & \cellcolor{gray!20}\textbf{323.1} & - & - & \cellcolor{gray!20}1.73 & \cellcolor{gray!20}\textbf{350.2} \\
MAR~\citep{mar} & 800 & - & - & \cellcolor{gray!20}2.35 &\cellcolor{gray!20}227.8 & - & - & \cellcolor{gray!20}1.55 & \cellcolor{gray!20}303.7  \\
xAR-H~\citep{xar} & 800 & - & - & - & - & - & - & \cellcolor{gray!20}1.24&\cellcolor{gray!20}301.6 \\
\arrayrulecolor{black}\midrule
\multicolumn{10}{l}{\textit{\textbf{Latent Diffusion with VAE}}} \\
\arrayrulecolor{black!30}\midrule
SiT~\citep{sit} & 1400 & \cellcolor{gray!20}8.61 & \cellcolor{gray!20}131.7 & 8.54 & 132.0 & \cellcolor{gray!20}2.06 & \cellcolor{gray!20}270.3 & 1.95 & 259.5\\
\multirow{1}{*}{REPA~\citep{repa}} 
 & 800 & \cellcolor{gray!20}5.90 & \cellcolor{gray!20}157.8 &5.78 & 158.3 & \cellcolor{gray!20}1.42 & \cellcolor{gray!20}305.7 & 1.29 & 306.3  \\
\multirow{1}{*}{DDT~\citep{ddt}}
 & 400 & - & - & \cellcolor{gray!20}6.27 & \cellcolor{gray!20}154.7  &1.40 & 303.6 &  \cellcolor{gray!20}1.26 & \cellcolor{gray!20}310.6  \\
\multirow{1}{*}{REPA-E~\citep{repa-e}} 
 & 800  &\cellcolor{gray!20} 1.83 & \cellcolor{gray!20}217.3  &  1.70 & 217.3 &  \cellcolor{gray!20}1.26 & \cellcolor{gray!20}314.9 & 1.15 & 304.0\\
\arrayrulecolor{black}\midrule
\multicolumn{10}{l}{\textit{\textbf{Latent Diffusion with RAE (Ours)}}} \\
\arrayrulecolor{black!30}\midrule
\multirow{1}{*}{\DDTXL (\DinoB)}
 & 800 & 1.60 & 242.7 & \cellcolor{gray!20}\textbf{1.51} & \cellcolor{gray!20}242.9  & 
 1.28 & 262.9 & \cellcolor{gray!20}\textbf{1.13} & \cellcolor{gray!20}262.6 \\
\arrayrulecolor{black}\bottomrule
\end{tabular}%
}
}
\caption{\textbf{Performance of different methods using different sampling strategies.} The officially reported numbers are marked in \sethlcolor{gray!20}\hl{gray}.}
\label{tab:comparison_sampling}
\vspace{-0.5cm}
\end{table*}

\section{Theory Experiment Setup}
\label{apd:theory_exp_setup}

In this section we list the setup of experiments in~\Cref{sec:theory} for overfitting images. 

\paragraph{Models.} By default, we use a \DiT with depth $12$, width $768$ and a attention head of $4$. The depth varies in $\{384. 512, 640, 768, 896\}$ and width varies in $\{4,12,16,24\}$ in~\Cref{fig:overfit}. Other configurations are the same as~\Cref{apd:diffusion_details}.

\paragraph{Targets.} We use three images for overfitting experiments, and all numbers reported are the average on three independent run on each images. We resize all targets to $256\times 256$ and not use any data augmentation .

\paragraph{Optimizations \& Sampling.} For a single target image, the batch size only influences the timestep. We therefore use a relatively small batch size of 32 and a constant learning rate of $2\times 10^{-4}$, optimized with AdamW ($\beta=(0.9,0.95)$). The model is trained for 1200 steps without EMA.
For sampling, We use standard ODE sampling with Euler sampler and 25 steps by default.

\section{Additional Ablation Studies}
\label{apd:ablations}

\begin{table*}[b]
\label{tab:gen_analysis}
\centering
{\captionsetup[subfloat]{position=top, labelfont=scriptsize}
\subfloat[
{\scriptsize gFID and rFID of different encoders w/ and w/o noisy-robust decoding.}
\label{tab:encoder_model_gen}
]{
\begin{minipage}{0.3\linewidth}{\begin{center}
\vspace{0pt}
\tablestyle{4pt}{1.1}
\begin{tabular}{@{}lcc@{}}
Model & gFID & rFID \\
\hline
\rowcolor{gray!20} \DinoB & 4.81 / \round{4.276128739} & 0.49 / 0.57 \\
\SiglipB & 6.69 / 4.93 & 0.53 / 0.82 \\
\MAEB & 16.14 / 8.38 & 0.16 / 0.28 \\
\multicolumn{3}{c}{~}
\end{tabular}
\end{center}}\end{minipage}
}
\hspace{1em}
\subfloat[
{\scriptsize gFID and rFID of different DINOv2 sizes w/ and w/o noisy-robust decoding.}
\label{tab:encoder_size_gen}
]{
\begin{minipage}{0.3\linewidth}{\begin{center}
\vspace{0pt}
\tablestyle{4pt}{1.1}
\begin{tabular}{@{}lcc@{}}
Model & gFID & rFID \\
\hline
S & 3.83 / \round{3.50} & 0.52 / \round{0.6392452160495736} \\
\rowcolor{gray!20} B & 4.81 / \round{4.276128739} & 0.49 / 0.57 \\
L & 6.77 / \round{6.09} & 0.52 / \round{0.5944578352265353} \\
\multicolumn{3}{c}{~}
\end{tabular}
\end{center}}\end{minipage}
}
\hspace{1em}
\subfloat[
{\scriptsize Scaling $\tau$ for \DinoB.}
\label{tab:tau_scaling}
]{
\begin{minipage}{0.2\linewidth}{\begin{center}
\vspace{0pt}
\tablestyle{4pt}{1.1}
\begin{tabular}{@{}lcc@{}}
$\tau$ & gFID & rFID \\
\hline
$0.0$ & \round{4.81} & 0.49 \\
$0.5$ & \round{4.39} & 0.54 \\
\rowcolor{gray!20}$0.8$ & \round{4.28} & 0.57 \\
$1.0$ & \round{4.20} & 0.60 \\
\end{tabular}
\end{center}}\end{minipage}
}
}
\caption{\textbf{Ablations on noise-augmented decoder training.} Despite minor drop in rFID, the noise-augmented training strategy can greatly improve the gFID across different encoders and model sizes. Default setups are marked in \sethlcolor{gray!20}\hl{gray}.}
\end{table*}
\subsection{Generation Performance across Encoders}
\label{apd:gen_perf_of_encoder}
As shown in~\Cref{tab:encoder_model_gen}, \DinoB achieves the best overall performance. \MAE performs substantially worse in generation, despite yielding much lower rFID. This shows that a low rFID does not necessarily imply a good image tokenizer. Therefore, we use \DinoB as the default encoder for our image generation experiments.

\subsection{Design choices for noise-augmented decoding}
\label{apd:robust_decoding_ablation}
We first analyze how noise-robust decoding affects reconstruction and generation.  Table~\ref{tab:tau_scaling} shows that larger $\tau$ improves generative FID (gFID) consistently, but slightly worsens reconstruction FID (rFID). This supports our intuition: noise encourages the decoder to learn smoother mappings that generalize better to imperfect latents, improving generation quality, but reducing exact reconstruction accuracy.

To test the robustness of this trade-off, we evaluate different encoders (Table~\ref{tab:encoder_model_gen}) with $\sigma=0.8$. Across all encoders, noisy training improves gFID while mildly harming rFID. The effect is strongest for weaker encoders such as \MAEB, where gFID improves from $16.14$ to $8.38$. Finally, Table~\ref{tab:encoder_size_gen} shows that the benefit holds across encoder sizes, suggesting that robust decoder training is broadly applicable.

Together, these results highlight a general principle: decoders should not only reconstruct clean latents, but also handle their noisy neighborhoods. This simple change enables RAEs to serve as stronger backbones for diffusion models.

\subsection{Design Choices for the DDT Head}
\label{apd:ddt_head_ablation}

\begin{table}[t]
    \begin{minipage}[b]{0.4\linewidth}
        \centering
        \setlength{\tabcolsep}{3.5pt}
        \begin{tabular}{l|cc|c}
            \toprule
            Depth& Width & GFLops & FID $\downarrow$ \\
            \midrule
            6 & 1152 (XL) & 25.65 & \round{2.36170774} \\
            4 & 2048 (G) & 53.14 & \round{2.305911967} \\
            2 & 2048 (G) & 26.78 & \round{2.158400731} \\
            \bottomrule
        \end{tabular}
        \caption{\textbf{DDT head should be wide and shallow.} A wide, shallow head yields lower FID than deeper (4-layer G) or narrower (6-layer XL) ones . }
        \label{tab:ablation_ddt_depth}
    \end{minipage}\hfill
    \begin{minipage}[b]{0.5\linewidth}
        \centering
        \begin{tabular}{l|cccc}
            \toprule
            & 2-768  & 2-1536 & 2-2048 & 2-2688 \\
            \midrule
            Dino-S &\round{2.656349839} &\round{2.474066044}& \cellcolor{gray!20}\round{2.420268923} &\round{2.427298567509922}  \\
            Dino-B &\round{2.492242145}  & \round{2.240311651}&\cellcolor{gray!20}\round{2.158400731} &\round{2.221583552}\\
            Dino-L & N/A & \round{2.953288159}& \cellcolor{gray!20} \round{2.726684653} & \round{2.639562206}\\
            \bottomrule
        \end{tabular}
        \caption{\textbf{Unguided gFID of different \RAE{} and DDT head.} Larger \RAE{} benefits more from wider DDT head. $d$-$w$: a DDT head with $d$ layers and width $w$. Default setups are marked in \sethlcolor{gray!20}\hl{gray}.}
        \label{tab:ablation_dit_head_on_dino_sizes}
    \end{minipage}
\end{table}

We now investigate design variants of the DDT head to identify those that serve its role more effectively.
Two factors turn out to be crucial: (a) the head needs to be wide and shallow, and (b) its benefit depends on the size of the underlying RAE encoder.

\paragraph{Width and Depth.}
We first vary the architecture of the DDT head, sweeping both width and depth while keeping the total parameter count approximately fixed. 
As shown in~\Cref{tab:ablation_ddt_depth}, a 2-layer, 2048-dim (G) head outperforms a 6-layer, 1152-dim (XL) head by a large margin, despite having similar GFlops. Moreover, a 4-layer, 2048-dim head does not improve over the 2-layer version, even though it has double the GFlops.
This suggests that a wide and shallow head is more effective for denoising.

\paragraph{Dependence on Encoder Size.}
Next, we analyze how the effect of the DDT head scales with the size of the RAE encoder. 
We fix the \DiT backbone as DiT-XL and vary the DDT head width from 768 (B) to 1536 (H), 2048 (G), and 2688 (T). We train \DDT models on top of three RAEs: \DinoS, \DinoB, and \DinoL. As shown in~\Cref{tab:ablation_dit_head_on_dino_sizes}, the optimal DDT head width increases as the encoder scales. When using \DinoS and \DinoB, the performance converges at a DDT head width of 2048 (G), while 2688 (T) head still brings performance gains on \DinoL. This suggests that the larger \RAE{} encoders benefit more from a wider DDT head. 

By default, we use a 2-layer, 2048-dim DDT head for all \DDT models.

\section{Additional Scaling Results}
\label{apd:loss_vs_timestep}
\begin{figure}[h]
    \centering
    \includegraphics[width=\linewidth]{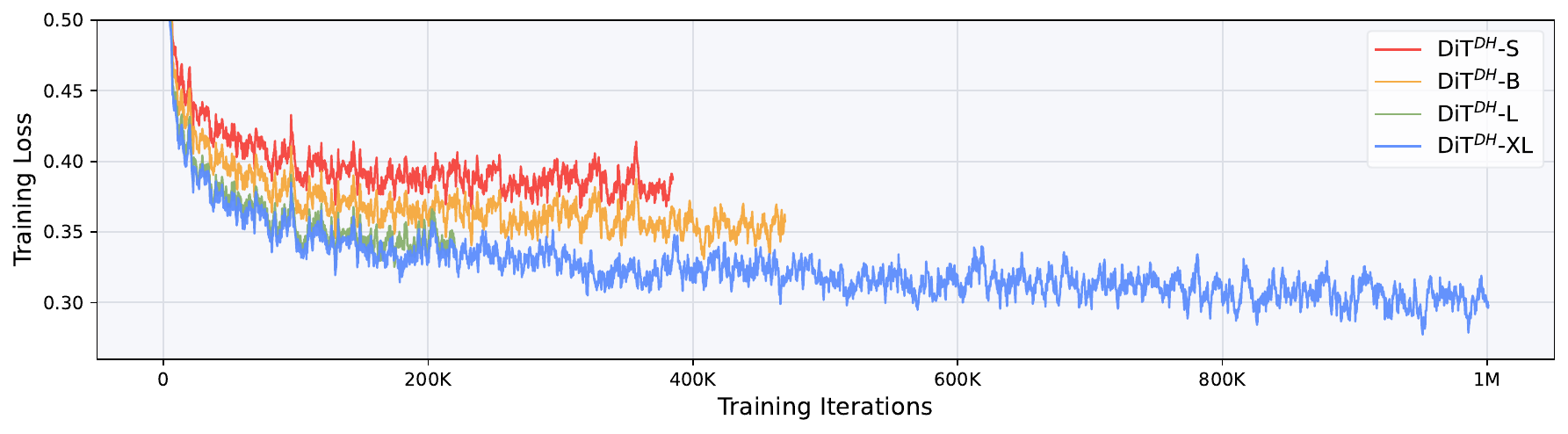}
    \caption{\textbf{Training loss of \DDT{} on \DinoB.} We use an EMA weight of 0.9 to smooth the loss.}
    \label{fig:loss_vis}
\end{figure}

We examine how model scale affects training loss in~\Cref{fig:loss_vis}. Increasing the model’s computational capacity leads \DDT{} to converge faster and reach a lower final loss.

\section{Guidance}
\label{apd:guidance}
We primarily adopt AutoGuidance~\citep{AG} as our guidance method, as it is easier to tune than CFG with interval~\citep{CFG,CFGinterval} and consistently delivers better performance. CFG with interval is used only for the \DiT-XL + \DinoS with guidance result reported in \Cref{tab:comparison_perf}.

\textbf{AutoGuidance.} We adopt AutoGuidance~\citep{AG} as our primary guidance method. The idea is to use a weaker diffusion model to guide a stronger one, analogous to the principle of Classifier-Free Guidance (CFG)~\citep{CFG}. Empirically, we observe that weaker base models and earlier training checkpoints consistently yield stronger guidance effects. In practice, we use the smallest variant, \DDT-S, as the guiding model, typically adopting an early checkpoint. Unless otherwise specified, AutoGuidance results are obtained using a guidance scale of 1.5 and a 20-epoch checkpoint of \DDT-S. For the best result of 1.13 FID (\DDTXL on \DinoB, $256 \times 256$), we sweep the guidance scale and use a guidance scale of 1.42 with a 14-epoch \DDT-S checkpoint. Notably, training this base model requires only about 0.05\% of the compute used to train the guided model (\DDT-XL for 800 epochs).

\textbf{Classifier-Free Guidance.} We also evaluate CFG~\citep{CFG} on \RAE{}. Interestingly, CFG without interval does not improve FID; in fact, applying it from the first diffusion step increases FID. With Guidance Interval~\citep{CFGinterval}, CFG can achieve competitive FID after careful grid search over scale and interval. However, on our final model (\DDT-XL with \DinoB), the best CFG result remains inferior to AutoGuidance. Considering both performance and tuning overhead, we adopt AutoGuidance as our default guidance method.

\section{Descriptions for Flow-based Models}
\label{app:flow-background}

Diffusion Models~\citep{ddpm, adm, edm} and more generally flow-based models~\citep{stochastic, fm, rf} are a family of generative models that learn to reverse a reference ``noising'' process. One of the most commonly used ``noising'' process is the linear interpolation between i.i.d Gaussian noise and clean data~\citep{SD3, sit}: \begin{align*}
    \rvx_t = (1 - t) \rvx + t \bm{\varepsilon}
\end{align*}
where $\rvx \sim p(\rvx), \bm{\varepsilon} \sim \gN(0, \mathbf{I}_n), t \in [0, 1]$, and we denote $\rvx_t$'s distribution as $\rho_t(\rvx)$ with $\rho_0 = p(\rvx)$ and $\rho_1 = \gN(0, \mathbf{I})$. Generation then starts at $t = 1$ with pure noise, and simulates some differential equation to progressively denoise the sample to a clean one. Specifically for flow-based models, the differential equations (an ordinary differential equation (ODE) or a stochastic differential equation (SDE)) are formulated through an underlying velocity $v(\rvx_t, t)$ and a score function $s(\rvx_t, t)$ \begin{align*}
    \text{ODE} \qquad &\mathrm{d}\rvx_t = v(\rvx_t, t)\mathrm{d}t \\
    \text{SDE} \qquad &\mathrm{d}\rvx_t = v(\rvx_t, t) \mathrm{d}t - \frac{1}{2}w_t s(\rvx_t, t)\mathrm{d}t + \sqrt{w_t} \mathrm{d}\bar{\rvw}_t
\end{align*}
where $w_t$ is any scalar-valued continuous function~\citep{sit}, and $\bar{\rvw}_t$ is the reverse-time Wiener process. The velocity $v(\rvx_t, t)$ is represented as a conditional expectation \begin{align*}
    v(\rvx_t, t) = \E[\dot{\rvx}_t | \rvx_t] = \E[\bm{\varepsilon} - \rvx | \rvx_t]
\end{align*} and can be approximated with model $v_\theta$ by minimizing the following training objective \begin{align*}
    \gL_{\text{velocity}}(\theta) = \int_0^1 \E_{\rvx, \bm{\varepsilon}} \bigg[\Vert v_\theta(\rvx_t, t) - (\bm{\varepsilon} - \rvx) \Vert^2\bigg] \mathrm{d}t
\end{align*}
The score function $s(\rvx_t, t)$ is also represented as a conditional expectation \begin{align*}
    s(\rvx_t, t) = -\frac{1}{t}\E[\bm{\varepsilon} | \rvx_t]
\end{align*}
Notably, $s$ is equivalent to $v$ up to constant factor~\citep{stochastic}, so it's enough to estimate only one of the two vectors.

\section{Evaluation Details}
\label{apd:evaluation_and_baselines}
\subsection{Evaluation}
\label{apd:evaluation}
We strictly follow the setup and use the same reference batches of ADM \citep{adm} for evaluation, following their official implementation.\footnote{\url{https://github.com/openai/guided-diffusion/tree/main/evaluations}} We use TPUs for generating 50k samples and use one single NVIDIA A100 80GB GPU for evaluation.

In what follows, we explain the main concept of metrics that we used for the evaluation. 
\begin{itemize}
\item \textbf{FID}~\citep{fid} evaluates the distance between the feature distributions of real and generated images. It relies on the Inception-v3 network~\citep{inceptionv3} and assumes both distributions follow multivariate Gaussians.
\item \textbf{IS}~\citep{is} also uses Inception-v3, but evaluates logits directly. It measures the KL divergence between the marginal label distribution and the conditional label distribution after softmax normalization.
\item \textbf{Precision and recall}~\citep{prec_and_rec} follow their standard definitions: precision reflects the fraction of generated images that appear realistic, while recall reflects the portion of the training data manifold covered by generated samples.

\end{itemize}

\section{Unconditional Generation}
\label{apd:unconditional_generation}

We are also interested in how \RAE{}s perform in unconditional generation. To evaluate this, we train \DDTXL on \RAE{} latents without labels. Following RCG~\citep{rcg}, we set labels to null during training and use the same null label at generation time. While classifier-free guidance (CFG) does not apply in this setting, AutoGuidance remains applicable. We therefore train \DDTXL for 200 epochs with AG (detailed in~\Cref{apd:guidance}). 

\begin{wraptable}{r}{0.45\linewidth}
    \vspace{-0.5cm}
        \begin{center}
        \scalebox{0.8}{
        \begin{tabular}{lccc}
            \toprule
                Method &  gFID $\downarrow$& IS $\uparrow$ \\
                \midrule
                \DiTXL + VAE   & 30.68 & 32.73 \\
                \DDTXL + \DinoB (w/ AG)   & 4.96 & 123.12 \\
                \midrule
                RCG + \DiTXL & 4.89 & 143.2 \\
                \bottomrule
        \end{tabular}
        }
        \caption{\small \textbf{Comparison of unconditional generation on ImageNet 256 $\times$ 256.} }
                \label{tab:uncond_gen_perf}

        \end{center}
    \vspace{-0.9cm}
\end{wraptable}
As shown in~\Cref{tab:uncond_gen_perf}, our model achieves substantially better performance than \DiTXL trained on VAE latents. Compared to RCG, a method specifically designed for unconditional generation, our approach attains competitive performance while being much simpler and more straightforward, without the need for two-stage generation.
\section{Visual Results}
\label{apd:visual_result_512}
We show uncurated $512\times 512$ samples sampled from our most performant model: \DDTXL on \DinoB with autoguidance scale = 1.5.

\begin{figure*}[t]
    \centering
    \begin{minipage}{0.45\linewidth}
        \centering
        \includegraphics[width=\linewidth]{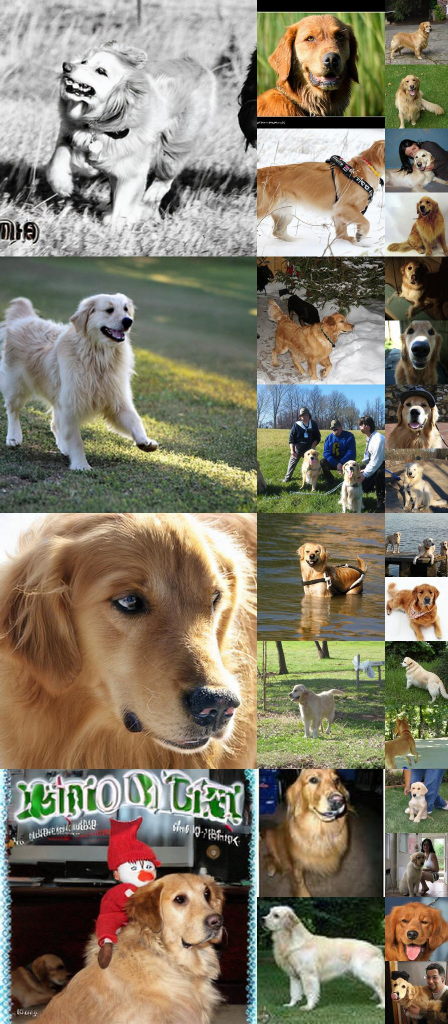}
        \caption{\textbf{Uncurated $512\times 512$ \DDTXL samples.} \\ AutoGudance Scale = 1.5\\ Class label = "golden retriever" (207) \\}
        \label{fig:first}
    \end{minipage}
    \hfill
    \begin{minipage}{0.45\linewidth}
        \centering
        \includegraphics[width=\linewidth]{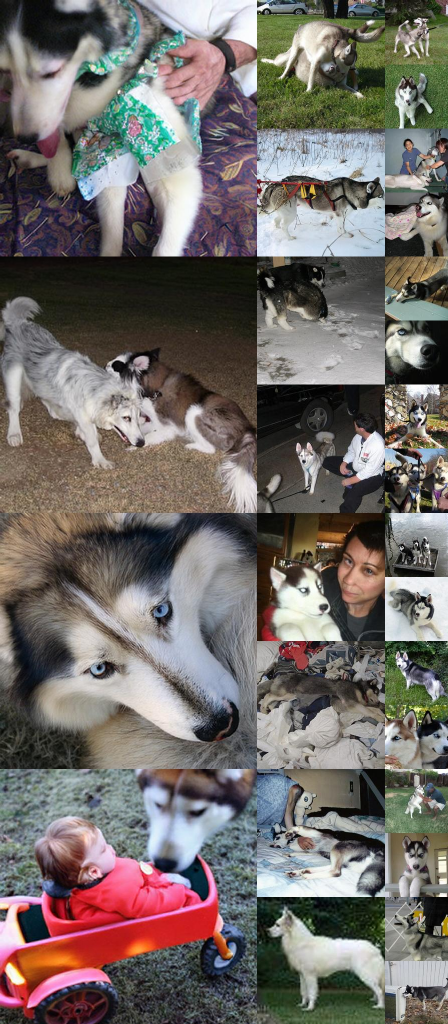}
        \caption{\textbf{Uncurated $512\times 512$ \DDTXL samples.} \\ AutoGudance Scale = 1.5\\ Class label = "husky" (250) \\}
        \label{fig:second}
    \end{minipage}
\end{figure*}

\begin{figure*}[t]
    \centering
    \begin{minipage}{0.45\linewidth}
        \centering
        \includegraphics[width=\linewidth]{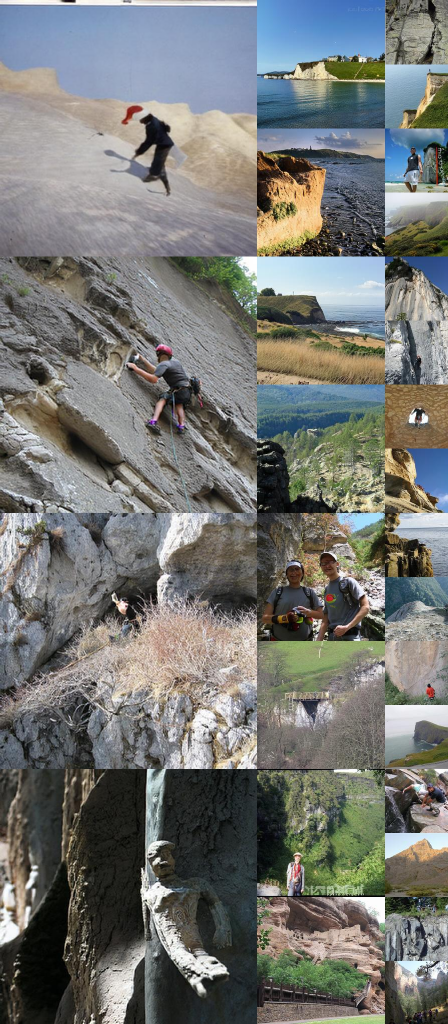}
        \caption{\textbf{Uncurated $512\times 512$ \DDTXL samples.} \\ AutoGudance Scale = 1.5\\ Class label = "cliff" (972) \\}
        \label{fig:first}
    \end{minipage}
    \hfill
    \begin{minipage}{0.45\linewidth}
        \centering
        \includegraphics[width=\linewidth]{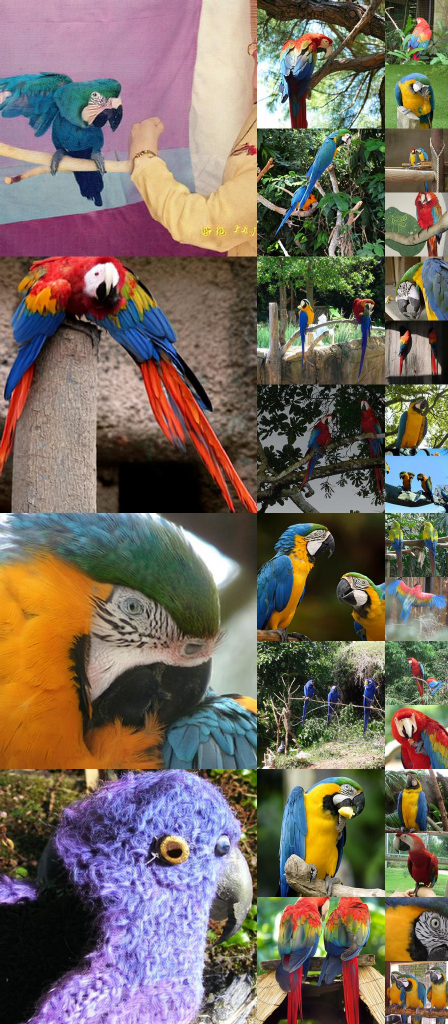}
        \caption{\textbf{Uncurated $512\times 512$ \DDTXL samples.} \\ AutoGudance Scale = 1.5\\ Class label = "macaw" (88) \\}
        \label{fig:second}
    \end{minipage}
\end{figure*}

\begin{figure*}[t]
    \centering
    \begin{minipage}{0.45\linewidth}
        \centering
        \includegraphics[width=\linewidth]{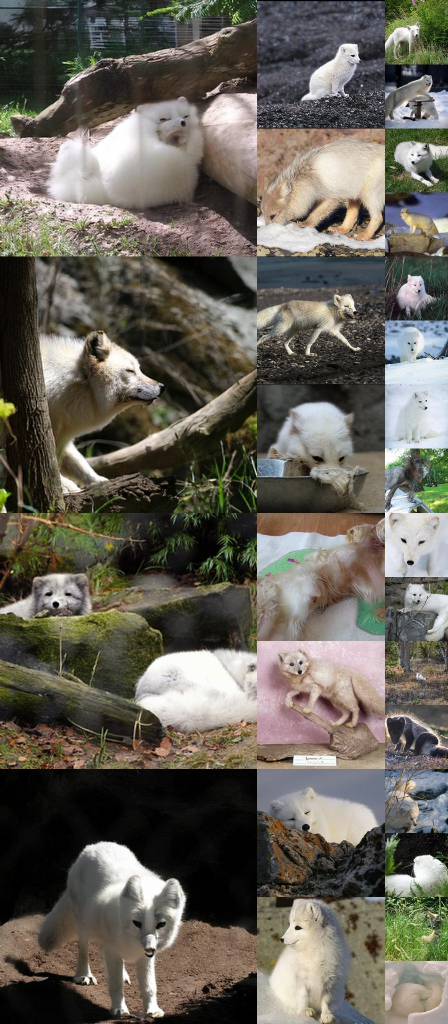}
        \caption{\textbf{Uncurated $512\times 512$ \DDTXL samples.} \\ AutoGudance Scale = 1.5\\ Class label = "arctic fox" (279) \\}
        \label{fig:first}
    \end{minipage}
    \hfill
    \begin{minipage}{0.45\linewidth}
        \centering
        \includegraphics[width=\linewidth]{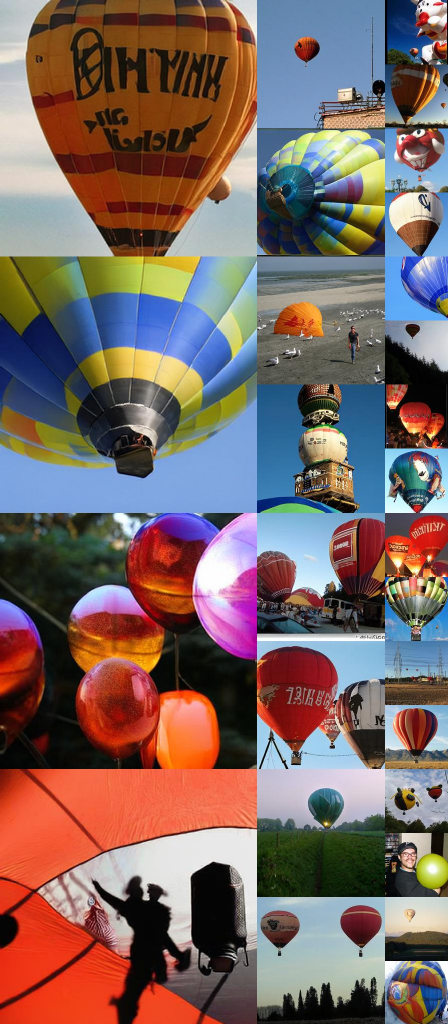}
        \caption{\textbf{Uncurated $512\times 512$ \DDTXL samples.} \\ AutoGudance Scale = 1.5\\ Class label = "balloon" (417) \\}
        \label{fig:second}
    \end{minipage}
\end{figure*}
\end{document}